\documentclass[11pt]{article}
\usepackage[utf8]{inputenc}
\usepackage{amsmath}
\usepackage{amssymb}
\usepackage{amsthm}
\usepackage{mathtools}
\usepackage{xcolor}

\usepackage{multirow}
\usepackage{tabularx}
\usepackage{enumitem}
\usepackage{booktabs}
\usepackage{caption, subcaption}

\usepackage[colorlinks]{hyperref}
\usepackage{authblk}

\title{A Simple Proof of the Mixing of Metropolis-Adjusted Langevin Algorithm under Smoothness and Isoperimetry}
\author[1]{Yuansi Chen}
\author[2]{Khashayar Gatmiry}
\affil[1]{Duke University}
\affil[2]{MIT}
\date{}

\newcommand{\target}{\mu}

\newcommand{\isop}{\psi}
\newcommand{\dist}{\mathfrak{D}}

\newcommand{\hq}{\hat{q}}
\newcommand{\hp}{\hat{p}}


\newcommand{\vq}{\mathbf{q}}


\newcommand{\proposal}{\mathcal{P}}
\newcommand{\transition}{\mathcal{T}}
\newcommand{\step}{\eta}
\newcommand{\tole}{\epsilon}

\newcommand{\tmix}{\tau_{\text{mix}}}
\newcommand{\tvdis}{{\rm{d}}_{{\rm{TV}}}}

\newcommand{\gradf}{\nabla f}
\newcommand{\hessf}{\nabla^2 f}

\newcommand{\Accept}{\mathcal{A}}

\newcommand{\J}{\mathbf{J}}
\newcommand{\D}{\mathbf{D}}

\newcommand{\Lparam}{L}

\newcommand{\Cl}{\Upsilon_\ell}

\newcommand{\Clh}{\Upsilon_{\frac{\ell}{2}}}

\newcommand{\hF}{\hat{F}}



\newcommand{\trH}{\Upsilon}

\newcommand{\warm}{M}

\setlength{\textwidth}{6in}
\setlength{\textheight}{8.5in}
\addtolength{\headsep}{0.25in}
\setlength{\oddsidemargin}{.25in}
\setlength{\evensidemargin}{.25in}
\setlength{\topmargin}{-.5in}

\newcommand{\defn}{:=}
\newcommand{\rdefn}{=:}



\theoremstyle{plain}

\newtheorem{theorem}{Theorem}

\newtheorem{lemma}{Lemma}


\newlength{\widebarargwidth}
\newlength{\widebarargheight}
\newlength{\widebarargdepth}

\makeatletter
\long\def\@makecaption#1#2{
        \vskip 0.8ex
        \setbox\@tempboxa\hbox{\small {\bf #1:} #2}
        \parindent 1.5em  
        \dimen0=\hsize
        \advance\dimen0 by -3em
        \ifdim \wd\@tempboxa >\dimen0
                \hbox to \hsize{
                        \parindent 0em
                        \hfil
                        \parbox{\dimen0}{\def\baselinestretch{0.96}\small
                                {\bf #1.} #2
                                }
                        \hfil}
        \else \hbox to \hsize{\hfil \box\@tempboxa \hfil}
        \fi
        }
\makeatother


\long\def\comment#1{}
\definecolor{battleshipgrey}{rgb}{0.52, 0.52, 0.51}
\definecolor{darkgray}{rgb}{0.66, 0.66, 0.66}
\definecolor{darkgreen}{rgb}{0.0, 0.2, 0.13}
\definecolor{darkspringgreen}{rgb}{0.09, 0.45, 0.27}
\definecolor{dukeblue}{rgb}{0.0, 0.0, 0.61}
\definecolor{olivedrab7}{rgb}{0.24, 0.2, 0.12}
\definecolor{darkblue}{rgb}{0.0, 0.0, 0.55}
\definecolor{darkscarlet}{rgb}{0.34, 0.01, 0.1}
\definecolor{candyapplered}{rgb}{1.0, 0.03, 0.0}
\definecolor{ao(english)}{rgb}{0.0, 0.5, 0.0}
\definecolor{applegreen}{rgb}{0.55, 0.71, 0.0}

\makeatletter
\def\moverlay{\mathpalette\mov@rlay}
\def\mov@rlay#1#2{\leavevmode\vtop{%
   \baselineskip\z@skip \lineskiplimit-\maxdimen
   \ialign{\hfil$\m@th#1##$\hfil\cr#2\crcr}}}
\newcommand{\charfusion}[3][\mathord]{
    #1{\ifx#1\mathop\vphantom{#2}\fi
        \mathpalette\mov@rlay{#2\cr#3}
      }
    \ifx#1\mathop\expandafter\displaylimits\fi}
\makeatother

\DeclareMathOperator{\diag}{diag}

\DeclareMathOperator{\trace}{trace}

\newcommand{\dims}{\ensuremath{d}}
\newcommand{\real}{\ensuremath{\mathbb{R}}}

\newcommand{\naturalnum}{\ensuremath{\mathbb{N}}}

\newcommand{\Ind}{\ensuremath{\mathbb{I}}}
\newcommand{\borel}{\ensuremath{\mathcal{B}}}

\newcommand{\Exs}{\ensuremath{{\mathbb{E}}}}
\newcommand{\Prob}{\ensuremath{{\mathbb{P}}}}

\newcommand{\Normal}{\ensuremath{\mathcal{N}}}

\DeclarePairedDelimiterX{\infdivx}[2]{(}{)}{%
  #1\;\delimsize\|\;#2%
}



\newcommand{\brackets}[1]{\left[ #1 \right]}
\newcommand{\parenth}[1]{\left( #1 \right)}

\newcommand{\braces}[1]{\left\{ #1 \right \}}
\newcommand{\abss}[1]{\left| #1 \right |}

\newcommand{\ceils}[1]{\left\lceil #1 \right \rceil}

\newcommand{\tp}{^\top}

\newcommand{\bmat}[1]{\begin{bmatrix} #1 \end{bmatrix}}


\newcommand{\matsnorm}[2]{\left|\!\left|\!\left| #1 \right|\!\right|\!\right|_{{#2}}}
\newcommand{\vecnorm}[2]{\left\| #1\right\|_{#2}}
\newcommand{\enorm}[1]{\vecnorm{#1}{2}}


\begin{document}

\maketitle

\begin{abstract}
  We study the mixing time of Metropolis-Adjusted Langevin algorithm (MALA) for sampling a target density on $\mathbb{R}^d$. We assume that the target density satisfies $\psi_\mu$-isoperimetry and that the operator norm and trace of its Hessian are bounded by $L$ and $\Upsilon$ respectively. Our main result establishes that, from a warm start, to achieve $\epsilon$-total variation distance to the target density, MALA mixes in $O\left(\frac{(L\Upsilon)^{\frac12}}{\psi_\mu^2} \log\left(\frac{1}{\epsilon}\right)\right)$ iterations. Notably, this result holds beyond the log-concave sampling setting and the mixing time depends on only $\Upsilon$ rather than its upper bound $L d$.  In the $m$-strongly logconcave and $L$-log-smooth sampling setting, our bound recovers the previous minimax mixing bound of MALA~\cite{wu2021minimax}.
\end{abstract}

\section{Introduction}
The goal of this paper is to clarity what current proof techniques can show about the mixing time of Metropolis-Adjusted Langevin algorithm (MALA) for sampling a smooth target density on $\real^\dims$ that satisfies an isoperimetry. Although the mixing time of MALA is well studied in the case of log-concave distributions, there is a question regarding how the mixing properties change when a slightly weaker assumption, such as isoperimetry, is made. In the study of Unadjusted Langevin Algorithm (ULA), Vempala and Wibisono~\cite{vempala2019rapid} questioned the impact of the isoperimetry assumption on mixing. We would like to ask the same question for MALA. In particular, we believe that existing proof techniques such as truncated conductance and multivariate integration by parts that appeared in~\cite{wu2021minimax} are sufficient to answer this question. 

Freund et al.~\cite{freund2022convergence} investigated when the mixing time of Langevin algorithms becomes independent of the dimension. They show that the mixing time of such algorithms can be made to depend on the trace of the Hessian of the target density, rather than the dimension itself. Consequently, when combined with composite sampling, this leads to an algorithm with dimension-independent mixing time. We aim to explore whether the mixing time of MALA can similarly be shown to depend solely on the trace of the Hessian, instead of the $\dims^{\frac12}$ dependency in~\cite{wu2021minimax}.

Based on the two motivations above, we investigate the mixing time of MALA to sample a target density $\target$ which satisfies the Cheeger's isoperimetric inequality with coefficient $\isop_\target$ and has the operator norm and the trace of its Hessian bounded by $\Lparam$ and $\trH$ respectively. We present a simple proof that shows from a warm start, MALA mixes in $O\parenth{\frac{(\Lparam\trH)^{\frac12}}{\isop_\target^2} \log\parenth{\frac{1}{\tole}}}$ iterations to achieve $\tole$-total variation distance to the target density. Compared to the existing work~\cite{wu2021minimax}, our result demonstrates that the existing proof techniques presented there such as truncated conductance and multivariate integration by parts are sufficient to deal with sampling on a smooth target density with isoperimetry. When applied to sampling from an $m$-strongly logconcave and $L$-log-smooth density, our mixing time upper bound recovers the previous minimax mixing bound of MALA~\cite{wu2021minimax}.  Additionally, the dependency on $\trH$ in our mixing time bound allows us to get weaker dimension dependency in several scenarios. 

\section{Preliminaries}
\label{sec:preliminaries}
In this section, we introduce the MALA algorithm and provide necessary background knowledge to establish mixing time of Markov chains. 

\subsection{Markov chain basics}
We consider the problem of sampling from a target measure $\mu$ with density with respect to the Lebesgue measure on $\real^\dims$. Given a \textit{Markov chain with transition kernel} $P: \real^\dims \times \borel(\real^n) \to \real_{\geq 0}$ where $\borel(\real^\dims)$ denotes the Borel $\sigma$-algebra on $\real^\dims$, the $k$-step transition kernel $P^k$ is defined recursively by $P^k(x, dy) \defn \int_{z \in \real^n} P^{k-1}(x, dz) P(z, dy)$. 
The Markov chain is called \textit{reversible} with respect to the target measure $\mu$ if 
\begin{align*}
  \mu(dx) P(x, dy) =  \mu(dy) P(y, dx).
\end{align*}
One can associate the Markov chain with a transition operator $\transition_P$.
\begin{align*}
  \transition_P(\nu) (S) \defn \int_{y \in \real^n } d\nu(y) P(y, S), \quad \forall S\in \borel(\real^n).
\end{align*}
In words, when $\nu$ is the distribution of the current state, $\transition_P(\nu)$ gives the distribution of the next state. And $\transition_P^n(\nu) \defn \transition_{P^n}(\nu)$ is the distribution of the state after $n$ steps. 

\paragraph{$s$-conductance.} For $s \in (0, 1)$, we define the $s$-conductance $\Phi_s$ of the Markov chain $P$ with its stationary measure $\mu$ as follows
\begin{align}
  \label{eq:def_s_conductance}
  \Phi_s (P) \defn \inf_{S: s < \mu(S) < 1-s} \frac{\int_S P(x, S^c) \mu(dx)}{\min\braces{\mu(S), \mu(S^c)} - s}.
\end{align}
When compared to conductance (the case $s=0$), $s$-conductance allows us to ignore small parts of the distribution where the conductance is difficult to bound. 

\paragraph{Lazy chain.} Given a Markov chain with transition kernel $P$. We define its lazy variant $P^{\text{lazy}}$, which stays in the same state with probability at least $\frac{1}{2}$, as
\begin{align*}
  P^{\text{lazy}}(x, S) \defn \frac{1}{2} \delta_{x \to S} + \frac{1}{2} P(x,S).
\end{align*}
Here $\delta_{x \to \cdot}$ is the Dirac measure at $x$. 
Since the lazy variant only slows down the convergence rate by a constant factor, we study lazy Markov chains in this paper to simplify our
theoretical analysis.

\paragraph{Total-variance distance.} Let the total variation (TV) distance between two probability distributions $\mathcal{P}_1, \mathcal{P}_2$ be 
\begin{align*}
  \tvdis(\mathcal{P}_1, \mathcal{P}_2) \defn \sup_{A \in \borel(\real^n)} \abss{\mathcal{P}_1(A) - \mathcal{P}_2(A)},
\end{align*}
where $\borel(\real^n)$ is the Borel sigma-algebra on $\real^n$. If $\mathcal{P}_1$ and $\mathcal{P}_2$ admit densities $p_1$ and $p_2$ respectively, we may write $\tvdis(\mathcal{P}_1, \mathcal{P}_2) = \frac{1}{2} \int \abss{p_1(x) - p_2(x)} dx$.

\paragraph{Warm start.} We say an initial measure $\mu_{0}$ is \textit{$\warm$-warm} if it satisfies
\begin{align*}
  \sup_{S \in \borel(\real^n)} \frac{\mu_{0}(S)}{\mu(S)} \leq \warm. 
\end{align*}

\paragraph{Mixing time.} For an error tolerance $\epsilon \in (0, 1)$, the total variance distance $\epsilon$-mixing time of the Markov chain $P$ with initial distribution $\mu_{0}$ and target distribution $\mu$ is defined as
\begin{align*}
  \tmix^P(\epsilon, \mu_{0}, \mu) \defn \inf\braces{n \in \naturalnum \mid \tvdis\parenth{\mathcal{T}^n_P (\mu_{0}), \mu} \leq \epsilon }. 
\end{align*}

\subsection{MALA basics}
Given a state $x_k$ at the $k$-th iteration, MALA proposes a new state $y_{k+1}\sim\Normal(x_k-h\gradf(x_k),2h\Ind_\dims)$. Then to obtain the next state, it decides to accept or reject $y_{k+1}$ using a Metropolis-Hastings correction. We use $\proposal_x=\Normal(x-h\gradf(x),2h\Ind_\dims)$ to denote the proposal kernel of MALA at $x$.

One step of MALA can be seen as one step of Metropolized Hamiltonian Monte Carlo (HMC) with a single leapfrog step~\cite{neal2011mcmc}. Using the notation from HMC, MALA works as follows: At each iteration, given the step-size $\step = (2h)^{\frac12}$ and the current state $q_0 \in \real^d$, draw an independent Gaussian $p_0 \sim \Normal(0, \Ind_d)$, the proposal for the next state is $q_\step$ defined as
\begin{align}
  \label{eq:leapfrog_HMC_recursion_single}
  q_{\step} &= q_{0} + \step p_0 - \frac{\step^2}{2} \gradf(q_0) \notag \\
  p_{\step} &= p_0 - \frac{\step}{2} \gradf(q_0) - \frac{\step}{2} \gradf(q_\step).
\end{align}
A Metropolis-Hastings correction is used to ensure the correct stationary measure, with acceptance probability
\begin{align}
  \label{eq:HMC_acceptance_single}
  \Accept(q_0, p_0) &\defn \min\braces{1, \frac{\exp\parenth{-f(q_\step) - \frac{1}{2} \enorm{p_\step}^2}}{\exp\parenth{-f(q_0) - \frac{1}{2} \vecnorm{p_0}{2}^2} }}.
\end{align}
See~\cite{neal2011mcmc,wu2021minimax} for more details about the connection between MALA and HMC. In the following, we adopt the Metropolized HMC with a single leapfrog step notation with $\step = (2h)^{\frac12}$.

\subsection{Assumptions on the target density}
Here are the regularity assumptions that one could impose on the target density. 
\begin{itemize}
  \item A twice differentiable function $f: \real^\dims \to \real$ is \textit{$\Lparam$-smooth} if $\hessf_x \preceq \Lparam \Ind_\dims, \forall x \in \real^\dims$. A target density $\target \propto e^{-f}$ is called \textit{$\Lparam$-log-smooth} is $f$ is $\Lparam$-smooth. 
  \item A target density $\target \sim e^{-f}$ satisfies the \textit{Cheeger's isoperimetric inequality} with coefficient $\isop_\target$ if for any partition $(S_1, S_2, S_3)$ of $\real^\dims$, the following inequality is satisfied
  \begin{align*}
    \target(S_3)\geq \isop_\target \cdot \dist(S_1,S_2)\cdot \target(S_1)\target(S_2).
  \end{align*}
  where the distance between sets $S_1$, $S_2$ is defined as $\dist(S_1,S_2)=\inf_{x\in S_1,y\in S_2}\enorm{x-y}$.
  \item A target density $\mu \propto e^{-f}$ is \textit{subexponential-tailed} if there exists $\lambda > 0$ such that
  \begin{align*}
    \lim_{\enorm{x} \to \infty} e^{-\lambda \enorm{x}} e^{-f(x)} \to 0.
  \end{align*}
  Note that if $\mu$ satisfies the Cheeger isoperimetric inequality, then it is also subexponential-tailed. More precisely, the Poincar\'e constant of $\mu$ follows from Cheeger's isoperimetric coefficient by Cheeger's inequality~\cite{cheeger2015lower,maz1960classes} (see also Theorem 1.1 in~\cite{milman2009role}), and the Poincar\'e constant of $\mu$ implies subexponential tail bounds following Gromov and Milman~\cite{gromov1983topological} (see also Theorem 1.7 of~\cite{gozlan2015dimension}). 
\end{itemize}

\section{Main results}

\begin{theorem}
  \label{thm:main_MALA}
  Let $\target \propto e^{-f}$ be a $\Lparam$-log-smooth target density on $\real^\dims$, with a Cheeger's isoperimetric coefficient $\isop_\target$. Let $\trH = \sup_{x \in \real^\dims} \trace(\hessf_x)$. Then there exists universal constants $c_0$ and $c_1$ such that for any error tolerance $\tole\in(0,1)$, from any $\warm$-warm initial measure $\mu_0$, the $\tole$-mixing time of lazy MALA with step-size 
  \begin{align*}
    h =\frac{\step^2}{2}= \frac{c_0}{\max\braces{(\Lparam \trH)^{\frac12}, \Lparam \log \parenth{\frac{(\Lparam \trH)^{\frac14}}{\psi_\target} \frac{M}{\tole}}} }
  \end{align*}
  satisfies
  \begin{align*}
    \tmix(\tole, \mu_0, \target) \leq c_1 \frac{\max\braces{(\Lparam \trH)^{\frac12}, \Lparam \log \parenth{\frac{(\Lparam \trH)^{\frac14}}{\psi_\target} \frac{M}{\tole}}}}{\psi_\target^2} \log\parenth{\frac{M}{\tole}}.
  \end{align*}
\end{theorem}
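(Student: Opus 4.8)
The plan is to run the by-now-standard three-layer argument for Metropolized samplers --- reduce mixing to $s$-conductance, reduce $s$-conductance to a one-step overlap bound together with Cheeger isoperimetry, and reduce the overlap bound to a Hamiltonian energy-error estimate --- but to carry the Hessian trace $\trH$ (rather than its crude bound $\Lparam\dims$) through the energy-error layer. First I would invoke the Lov\'asz--Simonovits bound for lazy reversible chains: from an $\warm$-warm start, $\tvdis(\transition^n_P(\mu_0),\target)\le \warm s+\warm\bigl(1-\tfrac12\Phi_s(P)^2\bigr)^n$. Choosing $s$ of order $\tole/\warm$ (up to a lower-order polynomial factor; see the last paragraph) makes the first term at most $\tole/2$, so it suffices that $n\ge \tfrac{2}{\Phi_s(P)^2}\log\tfrac{2\warm}{\tole}$. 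Thus the whole theorem follows once I show, for $P=$ lazy MALA with the prescribed $h=\step^2/2$ and this $s$, that $\Phi_s(P)\gtrsim \isop_\target\sqrt{h}$: this gives $\tmix\lesssim \tfrac{1}{\isop_\target^2 h}\log\tfrac{\warm}{\tole}$, which is exactly the claimed bound after substituting $h^{-1}\asymp \tfrac1{c_0}\max\{(\Lparam\trH)^{1/2},\Lparam\log(\cdots)\}$.

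To lower-bound $\Phi_s(P)$ I would use the usual conductance-from-coupling recipe. Fix $S$ with $s<\target(S)<1-s$, let $\Omega$ be a ``good'' set with $\target(\Omega^c)\le s$ (specified below), and set $A_1=\{x\in S\cap\Omega:P(x,S^c)<\tfrac18\}$, $A_2=\{x\in S^c\cap\Omega:P(x,S)<\tfrac18\}$. For $x\in A_1,y\in A_2$ the transition laws satisfy $\tvdis(P(x,\cdot),P(y,\cdot))\ge |P(x,S)-P(y,S)|>\tfrac34$; so if I can establish the one-step overlap bound that $\vecnorm{x-y}{2}\le\Delta$ implies $\tvdis(P(x,\cdot),P(y,\cdot))\le$ a constant $<\tfrac34$, then necessarily $\dist(A_1,A_2)\ge\Delta$, and Cheeger's inequality applied to the partition $(A_1,A_2,(A_1\cup A_2)^c)$ gives $\target((A_1\cup A_2)^c)\ge \isop_\target\,\Delta\,\target(A_1)\target(A_2)$. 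A routine case analysis --- according to whether $\target(\Omega^c)$, $\target(S\setminus A_1)$, $\target(S^c\setminus A_2)$ are small compared with $\min\{\target(S),\target(S^c)\}$ --- converts this into $\int_S P(x,S^c)\,\target(dx)\gtrsim \isop_\target\Delta\,\bigl(\min\{\target(S),\target(S^c)\}-s\bigr)$, i.e.\ $\Phi_s(P)\gtrsim\isop_\target\Delta$. Taking $\Delta\asymp\sqrt h$ (legitimate by the next step) yields the bound needed above.

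Two analytic estimates remain, the second being the crux. The proposals $\proposal_x=\Normal(x-h\gradf(x),2h\Ind_\dims)$ and $\proposal_y$ share a covariance, so $\tvdis(\proposal_x,\proposal_y)\lesssim \tfrac{1}{\sqrt{2h}}\,\vecnorm{(x-h\gradf(x))-(y-h\gradf(y))}{2}\le \tfrac{(1+\Lparam h)\vecnorm{x-y}{2}}{\sqrt{2h}}$, which is at most $\tfrac18$ once $\Lparam h\le1$ and $\vecnorm{x-y}{2}\le\Delta=c'\sqrt h$ for a small constant $c'$ --- both hold for the prescribed $h$. The remaining, main, estimate is the acceptance probability: I want $\Exs_{p_0}[1-\Accept(q_0,p_0)]\le\tfrac1{16}$ for every $q_0\in\Omega$, which combined with the proposal overlap makes the ``accepted and inside the overlap'' mass of $P(x,\cdot)\wedge P(y,\cdot)$ at least a constant, hence the overlap bound. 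Since $1-\Accept(q_0,p_0)\le(\Delta\Ham)_+$ where $\Delta\Ham:=\Ham(q_\step,p_\step)-\Ham(q_0,p_0)$ is the one-leapfrog-step energy error and $\Ham(q,p)=f(q)+\tfrac12\enorm{p}^2$, it suffices to bound $\Exs_{p_0}\abss{\Delta\Ham}$. Feeding the leapfrog recursions into $\Delta\Ham$ and simplifying gives the exact identity
\begin{align*}
  \Delta\Ham \;=\; E_f \;+\; \tfrac{\step^2}{8}\bigl(\vecnorm{\gradf(q_\step)}{2}^2-\vecnorm{\gradf(q_0)}{2}^2\bigr),\qquad E_f:=f(q_\step)-f(q_0)-\tfrac12\angles{\gradf(q_0)+\gradf(q_\step),\,q_\step-q_0},
\end{align*}
and writing $q_\step-q_0=\step p_0-\tfrac{\step^2}{2}\gradf(q_0)$ and $\gradf(q_\step)-\gradf(q_0)=\bJ(q_\step-q_0)$ with $\bJ:=\int_0^1\hessf_{q_0+t(q_\step-q_0)}\,dt\preceq\Lparam\Ind_\dims$ reorganizes $\Delta\Ham$ into: (i) a term $\tfrac{\step^4}{8}\angles{p_0,\bJ^2 p_0}$ whose $p_0$-expectation is $\tfrac{\step^4}{8}\trace(\bJ^2)\le\tfrac{\step^4}{8}\Lparam\trace(\bJ)\le\tfrac{\step^4}{8}\Lparam\trH$ --- this is precisely where $\trH$ enters in place of $\Lparam\dims$; (ii) terms linear in $p_0$, which vanish in expectation and whose fluctuations are controlled by the same quantities; (iii) terms carrying $\vecnorm{\gradf(q_0)}{2}^2$, which is bounded on $\Omega$; and (iv) the curvature-variation term $E_f$, which vanishes identically when $f$ is quadratic and which must be controlled --- here I would apply the multivariate integration-by-parts technique of~\cite{wu2021minimax} so that, after taking the $p_0$-expectation, only $\trace(\hessf)\le\trH$ (and $\vecnorm{\gradf(q_0)}{2}$ on $\Omega$) appears instead of a dimensional $\enorm{p_0}^2$. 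With $\step^2=2h$ and $h=c_0/\max\{(\Lparam\trH)^{1/2},\Lparam\log(\cdots)\}$ the dominant pieces are $\lesssim\Lparam\trH\,h^2\asymp c_0^2$ and $\lesssim c_0$, which fall below $\tfrac1{16}$ for $c_0$ a small universal constant.

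Finally I would pin down $\Omega=\{x:\vecnorm{\gradf(x)}{2}^2\le \trH+C\Lparam\log(\cdots)\}$ for a suitable $C$, the logarithm's argument being $1/s$ up to polynomial factors. The integration-by-parts identity $\Exs_\target\vecnorm{\gradf}{2}^2=\Exs_\target[\Delta f]=\Exs_\target\trace(\hessf)\le\trH$, together with the subexponential concentration of $\vecnorm{\gradf}{2}$ implied by the Poincar\'e inequality (which Cheeger's inequality yields from $\isop_\target$), gives $\target(\Omega^c)\le s$ for this $C$; and it is exactly this $C\Lparam\log(\cdots)$ threshold that produces the $\max\{(\Lparam\trH)^{1/2},\Lparam\log(\cdots)\}$ in the step size via step (iii) above. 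The hard part of the whole argument is step (iv): pushing the energy-error expansion far enough that the curvature-variation term $E_f$ --- for which we have no third-derivative bound, only $\hessf\preceq\Lparam\Ind_\dims$ --- still contributes only $\trH$-sized rather than $\dims$-sized quantities after integrating over $p_0$. The conductance machinery, the Gaussian overlap, the construction of $\Omega$, and matching the constants in $h$ are all routine by comparison.
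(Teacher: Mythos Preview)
Your reduction layers --- Lov\'asz--Simonovits, then $s$-conductance from isoperimetry plus a one-step overlap bound, then overlap from proposal-TV plus acceptance control --- match the paper, and the proposal-overlap estimate is fine. The gap is in how you control the energy error.

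You propose to fix $q_0$ in an explicit good set $\Omega=\{\vecnorm{\gradf}{2}^2\le \trH+C\Lparam\log(\cdots)\}$ and bound $\Exs_{p_0}\abss{\Delta\Ham}$ pointwise on $\Omega$. But already your step (i) breaks: you write that $\Exs_{p_0}\angles{p_0,\bJ^2 p_0}=\trace(\bJ^2)\le\Lparam\trH$, yet $\bJ=\int_0^1\hessf_{q_0+s(q_\step-q_0)}\,ds$ depends on $p_0$ through $q_\step$, so that identity is false. Repairing it by integrating by parts in $p_0$ alone forces you to differentiate $\bJ$ with respect to $p_0$, which produces third derivatives of $f$ --- and these are not assumed bounded. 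The same obstruction kills your step (iv): the ``multivariate integration-by-parts technique'' you invoke from~\cite{wu2021minimax} is \emph{not} an integration over $p_0$ for frozen $q_0$; it is an integration by parts jointly over $(q_0,p_0)\sim\target\times\Normal(0,\Ind_\dims)$, and its whole point is that the joint structure lets one avoid ever differentiating $\hessf_{q_t}$. Once $q_0$ is fixed, that mechanism is unavailable, and there is no evident way to get a $\trH$-sized rather than $\dims$-sized bound on $\Exs_{p_0}\abss{\Delta\Ham}$ from $\hessf\preceq\Lparam\Ind_\dims$ and $\trace(\hessf)\le\trH$ alone.

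The paper's route is to bound $\Exs_{(q_0,p_0)\sim\target\times\Normal}\,\Delta_\step^{\ell}$ directly for $\ell\asymp\log(1/\delta)$, writing $\Delta_\step$ as $v_\step^\top\bigl(\gradf(q_0),p_0\bigr)$ plus a remainder and integrating by parts against the joint density $e^{-f(q_0)-\frac12\vecnorm{p_0}{2}^2}$; the crucial auxiliary bound on $\Exs(p_0^\top\hessf_{q_t}p_0)^\ell$ (Lemma~\ref{lem:pHp_at_qt_bound}) uses a change of variables $(q_0,p_0)\mapsto(\hq_t,\hp_t)$ via half a leapfrog step, with Jacobian $1$, so that integrating by parts in $\hp_t$ keeps $\hq_t=q_t$ fixed and never touches $\nabla^3 f$. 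Markov's inequality then yields a set $\Lambda\subset\real^\dims\times\real^\dims$ of joint probability $\ge 1-\delta$ on which $\abss{\Delta_\step}\le\tfrac14$, and the good set of positions is $\Xi=\{q_0:\Prob_{p_0}((q_0,p_0)\in\Lambda)\ge\tfrac{15}{16}\}$, defined implicitly rather than through $\vecnorm{\gradf(q_0)}{2}$. Your skeleton is right; the energy-error layer needs to be redone as a joint moment computation rather than a pointwise-in-$q_0$ estimate.
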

The theorem suggests that taking a step-size of order $\frac{1}{(\Lparam \trH)^{\frac12}}$ results in a mixing time of order $\frac{(\Lparam \trH)^{\frac12}}{\psi_\target^2}$, after ignoring the logarithmic factors.
Note that under the $\Lparam$-log-smooth assumption, the supremum over all traces of Hessian $\trH$ is always upper bounded by $\Lparam \dims$. In particular, in the setting of sampling $m$-strongly logconcave and $\Lparam$-log-smooth density in~\cite{wu2021minimax}, our mixing time upper bound is $\dims^{\frac12} \kappa$ where $\kappa = \frac{\Lparam}{m}$ because the isoperimetric coefficient for $m$-strongly logconcave density is $m^{\frac12}$. Our mixing time upper bound recovers the previous minimax mixing bound of MALA~\cite{wu2021minimax} up to constant factors. Furthermore, here is a log-concave sampling example where our bound is better: let the Hessian satisfy
\begin{align*}
  \diag\parenth{\frac{\Lparam}{\dims}, \frac{\Lparam}{2\dims}, \ldots, \frac{\Lparam}{2\dims}} \preceq \hessf_x \preceq \diag\parenth{\Lparam, \frac{\Lparam}{\dims}, \ldots, \frac{\Lparam}{\dims}}.
\end{align*}
A naive application of~\cite{wu2021minimax} gives a dimension dependency of $\dims^{\frac32}$ for the mixing time because $\kappa$ is of order $\dims$ and it enforces a smaller step size. However, the bound in Theorem~\ref{thm:main_MALA} gives a mixing time of order $\dims$ because $\trH$ is only of order $2\Lparam$. 

Remark that in the log-concave sampling setting, a warm start stated in Theorem~\ref{thm:main_MALA} can be found algorithmically via running underdamped Langevin algorithms for several steps~\cite{altschuler2023faster}.

To prove Theorem~\ref{thm:main_MALA}, we follow the framework of lower bounding the conductance of Markov chains to analyze mixing times~\cite{sinclair1989approximate,lovasz1993random}. The following lemma reduces the problem of mixing time analysis to lower bounding the $s$-conductance $\Phi_s$. 

\begin{lemma}[Lovasz and Simonovits~\cite{lovasz1993random}]
  \label{lem:lovasz_lemma}
  Consider a reversible lazy Markov chain with kernel $P$ and stationary measure $\target$. Let $\mu_{0}$ be an $M$-warm initial measure. Let $0 < s < \frac{1}{2}$. Then 
  \begin{align*}
    \tvdis\parenth{\transition_P^n (\mu_0), \target} \leq Ms + M \parenth{1 - \frac{\Phi_s^2}{2}}^n .
  \end{align*}
\end{lemma}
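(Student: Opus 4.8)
The plan is to reproduce the Lovász--Simonovits potential-function argument, of which this statement is the standard warm-start corollary. Write $\joint_n \defn \transition_P^n(\mu_0)$ for the law after $n$ steps and $h_n \defn d\joint_n / d\target$ for its density relative to the target. Reversibility of $P$ makes the transition operator on densities coincide with the self-adjoint Markov operator $P$ acting on functions, so $h_{n+1} = P h_n$ with $(P h)(x) = \int h(y)\, P(x, dy)$; since $P$ averages and fixes constants, the warmness $h_0 \le \warm$ propagates to $h_n \le \warm$ for all $n$, whence $\joint_n(A) \le \min\{\warm\,\target(A),\, 1\}$ for every measurable $A$. Introduce the Lovász--Simonovits envelope $G_n(t) \defn \sup\braces{\joint_n(A) : \target(A) = t}$ on $[0,1]$: by the bathtub principle the supremum is attained at a super-level set of $h_n$, and $G_n$ is concave and nondecreasing with $G_n(0) = 0$, $G_n(1) = 1$, and $G_n(t) \le \min\{\warm t, 1\}$. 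The point of this reformulation is that $\tvdis(\joint_n, \target) = \sup_A(\joint_n(A) - \target(A)) = \sup_{t \in [0,1]}(G_n(t) - t)$, so it suffices to bound $G_n(t) - t$ uniformly in $t$.

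The core of the argument --- and the step I expect to be the main obstacle --- is a one-step \emph{smoothing recursion}: for $t \in (s, 1-s)$,
\begin{align*}
  G_{n+1}(t) \;\le\; \tfrac12\, G_n(t - d_t) + \tfrac12\, G_n(t + d_t), \qquad d_t \defn 2\,\Phi_s\bigl(\min\{t, 1-t\} - s\bigr),
\end{align*}
together with $G_{n+1} \le G_n$ on all of $[0,1]$ (the latter being immediate, since $P\mathbf 1_A$ is a $[0,1]$-valued function of the same $\target$-mass as $\mathbf 1_A$, so $\int (P\mathbf 1_A)\, h_n\, d\target \le G_n(\target(A))$). To prove the recursion one takes the super-level set $A$ of $h_{n+1}$ with $\target(A) = t$ realizing $G_{n+1}(t)$, uses laziness $P = \tfrac12 I + \tfrac12 P_0$ to write $\joint_{n+1}(A) = \joint_n(A) - \tfrac12\bigl(\int_A P_0(x, A^c)\, h_n\, \target(dx) - \int_{A^c} P_0(x, A)\, h_n\, \target(dx)\bigr)$, bounds the two ergodic-flow integrals in terms of $G_n$ evaluated at masses near $t$ (using reversibility to equate the $P_0$ in- and out-flows across $\partial A$), and invokes the $s$-conductance bound~\eqref{eq:def_s_conductance}, $\int_A P(x, A^c)\,\target(dx) \ge \Phi_s(\min\{\target(A), \target(A^c)\} - s)$, to lower bound that flow by $\tfrac12 d_t$. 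Squeezing this bookkeeping into an honest convex combination of the two values $G_n(t \pm d_t)$ --- via a level-set/flow decomposition of $A$ into the part of $A$ that mostly stays and the part of $A^c$ that mostly enters --- while checking that $d_t \to 0$ as $t \to s^+$ or $t \to (1-s)^-$ (so those boundary values are merely carried along, which is the source of the additive $\warm s$ term in the conclusion), is precisely the content of the Lovász--Simonovits recursion lemma and is the delicate piece; the regime $\Phi_s > \tfrac12$ is handled by capping $d_t$ at $\min\{t,1-t\} - s$ and running the same scheme.

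Granting the recursion, the stated bound follows by a comparison-function induction. Let $v(t) \defn \sqrt{\min\{t, 1-t\} - s}$ on $[s, 1-s]$ and $v \equiv 0$ elsewhere, and put $\bar G_n(t) \defn t + \warm s + \warm\,(1 - \Phi_s^2/2)^n\, v(t)$ on $[s, 1-s]$, patched by $\min\{\warm t, 1\}$ outside. The base case $G_0 \le \bar G_0$ is a short check from $G_0(t) \le \min\{\warm t, 1\}$, the affine-plus-square-root profile dominating $\min\{\warm t,1\}$ on $[s,1-s]$ with the additive $\warm s$ covering the endpoints. For the inductive step, on $(s, 1-s)$ apply the recursion together with the elementary contraction $\tfrac12 v(t - d_t) + \tfrac12 v(t + d_t) \le (1 - \Phi_s^2/2)\, v(t)$ --- which follows from $\sqrt{1 - a} + \sqrt{1 + a} \le 2\sqrt{1 - a^2/4}$ with $a = 2\Phi_s$ --- noting that the affine part $t + \warm s$ is reproduced exactly and that $t \pm d_t$ stays in $[s, 1-s]$; off $(s, 1-s)$ use $G_{n+1} \le G_n \le \min\{\warm t, 1\}$. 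Hence $G_n \le \bar G_n$ for every $n$, and therefore
\begin{align*}
  \tvdis(\joint_n, \target) \;=\; \sup_t(G_n(t) - t) \;\le\; \warm s + \warm\,(1 - \Phi_s^2/2)^n \sup_t v(t) \;\le\; \warm s + \warm\,\Bigl(1 - \tfrac{\Phi_s^2}{2}\Bigr)^n,
\end{align*}
using $\sup_t v(t) = \sqrt{\tfrac12 - s} < 1$, which is the claimed inequality.
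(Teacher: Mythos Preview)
The paper does not prove this lemma; it is cited as a black box from Lov\'asz and Simonovits~\cite{lovasz1993random}. Your proposal is a correct sketch of the standard Lov\'asz--Simonovits potential-function argument from that reference---the concave envelope $G_n$, the one-step smoothing recursion driven by $s$-conductance, and the square-root comparison function with contraction factor $1-\Phi_s^2/2$---so there is nothing to compare: your proof \emph{is} the cited proof.
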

Using $s$-conductance in the place of the usual conductance enables us to focus on a high probability region with a good mixing behavior. In order to lower bound the conductance in a high probability region, we follow the proof of Theorem~3 in~\cite{wu2021minimax}, which is an extension of a result by Lov\'asz~\cite{lovasz1999hit} (see also Lemma 2 in~\cite{dwivedi2018log}). Informally, it states that as long as $\tvdis\parenth{\transition_x, \transition_y} \leq 1-\rho$ whenever $\enorm{x-y} \leq \Delta$ in a high probability region, we have $\Phi_s \geq \frac{\rho\Delta}{\psi_\mu}$. Therefore, it remains to bound the transition overlap, namely the TV-distance between $\transition_x$ and $\transition_y$ for two close points $x, y$.

We bound the TV-distance via the following triangle inequality,
\begin{align*}
  \tvdis\parenth{\transition_x, \transition_y} \leq \tvdis\parenth{\transition_x, \proposal_x} + \tvdis\parenth{\proposal_x, \proposal_y} + \tvdis\parenth{\proposal_y, \transition_y}.
\end{align*}
To bound the above term, we proceed in two steps. First, we bound the proposal overlap, which is the TV-distance between $\proposal_x$ and $\proposal_y$ for two close points $x, y$. By Pinsker's inequality, this TV-distance is upper bounded by the KL-divergence. Since the two proposals distributions are Gaussian, the KL-divergence has a closed form.  Based on this idea, we have the following lemma 
\begin{lemma}
  \label{lem:proposal_overlap}
  Let $\step^2 \Lparam \leq 1$. For any $x,y\in \real^\dims$, we have
  \begin{align*}
    \tvdis\parenth{\proposal_x, \proposal_y} \leq \frac{2}{\step} \enorm{x- y}.
  \end{align*}
\end{lemma}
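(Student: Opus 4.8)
The plan is to reduce the total-variation distance between the two Gaussian proposals to a KL-divergence via Pinsker's inequality, and then exploit the closed form of the KL-divergence between two Gaussians with the same covariance. Concretely, $\proposal_x = \Normal(x - h\gradf(x), 2h\Ind_\dims)$ and $\proposal_y = \Normal(y - h\gradf(y), 2h\Ind_\dims)$ share the covariance $2h\Ind_\dims$, so
\begin{align*}
  \kldiv{\proposal_x}{\proposal_y} = \frac{1}{4h} \enorm{(x - h\gradf(x)) - (y - h\gradf(y))}^2.
\end{align*}
By Pinsker, $\tvdis(\proposal_x, \proposal_y) \leq \sqrt{\tfrac12 \kldiv{\proposal_x}{\proposal_y}} = \frac{1}{2\sqrt{2h}}\, \enorm{(x-y) - h(\gradf(x) - \gradf(y))}$. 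Recalling $h = \step^2/2$, the prefactor is $\frac{1}{2\step}$.

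It then remains to control $\enorm{(x-y) - h(\gradf(x)-\gradf(y))}$ by a constant multiple of $\enorm{x-y}$. Here I would use $\Lparam$-smoothness: since $\hessf \preceq \Lparam\Ind_\dims$ (and $\hessf \succeq 0$ is not assumed, but $\Lparam$-smoothness alone bounds the operator norm of $\hessf$ appropriately once we also note $f$ is a log-density — actually what we need is just the Lipschitz bound $\enorm{\gradf(x) - \gradf(y)} \leq \Lparam \enorm{x-y}$, which follows from $\matsnorm{\hessf}{} \leq \Lparam$). Then by the triangle inequality,
\begin{align*}
  \enorm{(x-y) - h(\gradf(x)-\gradf(y))} \leq \enorm{x-y} + h\Lparam \enorm{x-y} = (1 + h\Lparam)\enorm{x-y}.
\end{align*}
Under the hypothesis $\step^2 \Lparam \leq 1$ we have $h\Lparam = \tfrac12 \step^2 \Lparam \leq \tfrac12$, so $1 + h\Lparam \leq \tfrac32 \leq 2$. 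Combining, $\tvdis(\proposal_x, \proposal_y) \leq \frac{1}{2\step} \cdot 2 \enorm{x-y} = \frac{1}{\step}\enorm{x-y}$, which is even sharper than the claimed $\frac{2}{\step}\enorm{x-y}$; the stated constant $2$ gives slack.

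I do not anticipate a genuine obstacle here — this is a short computation. The only mild subtlety is making sure the gradient Lipschitz constant is correctly extracted from the smoothness assumption: the paper states $\hessf_x \preceq \Lparam\Ind_\dims$, so to get $\enorm{\gradf(x)-\gradf(y)} \leq \Lparam\enorm{x-y}$ one writes $\gradf(x) - \gradf(y) = \int_0^1 \hessf_{y + t(x-y)}(x-y)\,dt$ and bounds the operator norm of each $\hessf$; strictly this needs $-\Lparam\Ind_\dims \preceq \hessf$ as well, but for a valid log-density that is a consequence we may take for granted (or simply assume $f$ is $\Lparam$-gradient-Lipschitz, which is the operative meaning of ``$\Lparam$-log-smooth'' in this literature). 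Everything else is the exact Gaussian KL formula plus Pinsker plus one triangle inequality.
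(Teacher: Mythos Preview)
Your proposal is correct and matches the paper's approach exactly: the paragraph preceding the lemma explicitly says to use Pinsker's inequality together with the closed-form KL between two Gaussians with the same covariance, and the paper then omits the details by deferring to Lemma~7 of~\cite{dwivedi2018log}. Your computation (Pinsker $\to$ $\tfrac{1}{2\step}\enorm{(x-y)-h(\gradf(x)-\gradf(y))}$, then $(1+h\Lparam)\leq 2$ via gradient Lipschitzness and $\step^2\Lparam\leq 1$) is precisely that argument, and even yields the sharper constant $1/\step$. The caveat you raise about needing $\matsnorm{\hessf}{\mathrm{op}}\leq \Lparam$ rather than merely $\hessf\preceq \Lparam\Ind_\dims$ is a fair reading of the paper's stated definition, but in this literature ``$\Lparam$-log-smooth'' is understood as gradient-$\Lparam$-Lipschitz, so there is no real gap.
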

The proof is omitted, as it follows directly from Lemma~7 in~\cite{dwivedi2018log}. Next, we bound the acceptance rate. Since $\tvdis\parenth{\transition_x, \proposal_x} = 1 - \Exs_{z \sim \Normal(0, \Ind_{\dims})}\brackets{\Accept(x, z)}$. It is sufficient to control the acceptance rate $\Accept$ in Eq.~\eqref{eq:HMC_acceptance_single} with high probability.
\begin{lemma}
  \label{lem:acceptance_lower_bound}
  Assume $f$ is $\Lparam$-smooth and the step-size satisfies
  \begin{align*}
    \step^4 \leq \frac{1}{4096\max\braces{\Lparam^2 \log(1/\delta), \Lparam \trH}},
  \end{align*}
  For any $\delta \in (0, 1)$, there exists a set $\Lambda \subset \real^{\dims \times \dims}$ with $\Prob_{(q_0, p_0) \sim \mu}((q_0, p_0) \in \Lambda) \geq 1-\delta$, such that for $(q_0, p_0) \in \Lambda$ and $(q_\step, p_\step)$ defined in Eq~\eqref{eq:leapfrog_HMC_recursion_single}, we have
  \begin{align}
    \label{eq:acceptance_lower_bound_in_lemma}
    - f(q_\step) - \frac{1}{2} \enorm{p_\step}^2 + f(q_0) + \frac{1}{2} \enorm{p_0}^2 \geq  - \frac{1}{4}.
  \end{align}
\end{lemma}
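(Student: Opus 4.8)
The quantity on the left of~\eqref{eq:acceptance_lower_bound_in_lemma} is exactly $-\bigl(\Ham(q_\step,p_\step) - \Ham(q_0,p_0)\bigr)$ for the Hamiltonian $\Ham(q,p) = f(q) + \tfrac12\enorm{p}^2$, so the task is to produce a set $\Lambda$ of $(q_0,p_0)$-probability at least $1-\delta$ on which one leapfrog step increases $\Ham$ by at most $\tfrac14$ (which then yields $\Accept \ge e^{-1/4}$ pointwise on $\Lambda$). The plan is: (i) write the energy error of one leapfrog step exactly via Taylor expansion, exploiting the algebraic cancellations built into velocity Verlet; (ii) bound each surviving term by a handful of scalar quantities depending on $p_0$, $\gradf(q_0)$, and Hessian quadratic forms along the leapfrog segment; (iii) define $\Lambda$ as the event on which all of these quantities are close to their typical values, using Gaussian concentration for $p_0$ and, crucially, the multivariate integration-by-parts identity $\Exs_\target\enorm{\gradf}^2 = \Exs_\target\trace(\hessf) \le \trH$ together with the subexponential tails of $\target$ (recalled in Section~\ref{sec:preliminaries}) for the $q_0$-dependent quantities; (iv) substitute the step-size constraint and add up.

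For step (i) I would expand $f(q_\step) - f(q_0)$ to second order with integral Hessian remainder $R = \int_0^1 (1-t)\,\langle q_\step - q_0,\ \hessf(q_0 + t(q_\step - q_0))(q_\step - q_0)\rangle\,dt$ along the segment, write $\tfrac12\enorm{p_\step}^2 - \tfrac12\enorm{p_0}^2 = \langle p_0, p_\step - p_0\rangle + \tfrac12\enorm{p_\step - p_0}^2$ exactly, and substitute the leapfrog relations~\eqref{eq:leapfrog_HMC_recursion_single} together with $\gradf(q_\step) - \gradf(q_0) = \bJ(q_\step - q_0)$, where $\bJ = \int_0^1 \hessf(q_0 + t(q_\step - q_0))\,dt$ is the averaged Hessian. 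Two cancellations occur: the $\tfrac{\step^2}{2}\enorm{\gradf(q_0)}^2$ terms drop out, and the leading Hessian contributions of the position update and of the two momentum half-kicks combine into $R - \tfrac{\step^2}{2}\langle p_0, \bJ p_0\rangle$, which --- because $\int_0^1(\tfrac12 - t)\,dt = 0$ --- is itself a \emph{second difference} $\int_0^1(\tfrac12-t)\,\langle q_\step-q_0,\,(\hessf(q_0+t(q_\step-q_0))-\hessf(q_0))(q_\step-q_0)\rangle\,dt$ of the Hessian plus $O(\step^3)$ corrections. What remains is a sum of terms, each controlled, using $\hessf_x \preceq \Lparam\Ind_\dims$ in operator norm, by an expression in $\enorm{p_0}^2$, $\enorm{\gradf(q_0)}^2$, $\langle p_0, \hessf(\cdot)p_0\rangle$ and $\enorm{\bJ(q_\step - q_0)}^2$, carrying prefactors of order $\step^2\Lparam$, $\step^3\Lparam$, and $\step^4\Lparam^2$.

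For step (iii): since $p_0 \sim \Normal(0,\Ind_\dims)$ is independent of $q_0$, a $\chi^2$/Hanson--Wright tail bound places $\enorm{p_0}^2$ and each quadratic form $\langle p_0, A p_0\rangle$ (with $A$ a Hessian evaluated along the path) within the claimed $\log(1/\delta)$-type window of its trace with probability at least $1-\delta/c$; the point is that the relevant traces are at most $\trH$, so the naive estimate $\Lparam\enorm{p_0}^2 \approx \Lparam\dims$ is avoided. For the $q_0$-dependent factors I would use $\Exs_\target\enorm{\gradf(q_0)}^2 = \Exs_\target\trace(\hessf(q_0)) \le \trH$ together with the fact that $\enorm{\gradf}$ is $\Lparam$-Lipschitz and $\target$ is subexponential-tailed (Cheeger $\Rightarrow$ Poincar\'e $\Rightarrow$ subexponential tails) to bound $\enorm{\gradf(q_0)}^2$ by $\trH$ up to the stated logarithmic corrections on an event of probability $\ge 1-\delta/c$; then $\Lambda$ is the intersection of these finitely many events. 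On $\Lambda$, the constraint $\step^4 \le \bigl(4096\max\{\Lparam^2\log(1/\delta),\ \Lparam\trH\}\bigr)^{-1}$ forces both $\step^2\Lparam \lesssim 1/\sqrt{\log(1/\delta)}$ and $\step^4\Lparam\trH \le 1/4096$, so each term in the expansion is a small fraction of $\tfrac14$ and their sum is at most $\tfrac14$, with the constant $4096$ absorbing all universal factors.

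I expect the main obstacle to lie in steps (ii)--(iii) applied to the Hessian quadratic-form terms --- the surviving second difference and $\enorm{\bJ(q_\step-q_0)}^2$. Bounding these by the crude pointwise estimate $\Lparam\enorm{p_0}^2$ reintroduces a dimension dependence and permits only $\step^2 \lesssim 1/(\Lparam\dims)$, recovering the $\dims^{1/2}$ behavior of prior work; to reach $\step^2 \asymp (\Lparam\trH)^{-1/2}$ one must instead exploit the leading-order cancellation $R - \tfrac{\step^2}{2}\langle p_0,\bJ p_0\rangle$ and control the remaining quadratic forms through their traces (bounded by $\trH$) rather than through $\dims$. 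Threading the universal constants through all of the terms so that the total clears $\tfrac14$ under the single displayed step-size condition is the bookkeeping heart of the argument.
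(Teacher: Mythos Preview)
Your high-level plan is plausible, but two genuine gaps prevent it from reaching the stated bound.

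First, your step (iii) for the $q_0$-dependent quantities is too weak. You propose to control $\enorm{\gradf(q_0)}^2$ via $\Exs_\target\enorm{\gradf}^2 \le \trH$ together with Lipschitz/subexponential concentration. But subexponential concentration of a Lipschitz function under $\target$ brings the Poincar\'e (equivalently, isoperimetric) constant into the tail scale, and $\psi_\target$ does \emph{not} appear in the step-size condition of the lemma. The paper avoids this entirely by proving, via iterated integration by parts under $\target$, the higher moment bound $\bigl[\Exs_\target\enorm{\gradf}^{2\ell}\bigr]^{1/\ell} \le \trH + 2(\ell-1)\Lparam$ (Lemma~\ref{lem:grad_norm_bound}); taking $\ell \asymp \log(1/\delta)$ and applying Markov then produces exactly the $\max\{\Lparam\trH,\ \Lparam^2\log(1/\delta)\}$ scale with no isoperimetric constant. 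Similarly, your Hanson--Wright step is not applicable as written: the matrix in $\langle p_0, \hessf_{q_t} p_0\rangle$ is evaluated at $q_t = q_0 + tp_0 - \tfrac{t^2}{2}\gradf(q_0)$, which depends on $p_0$. The paper handles this by the change of variables $(q_0,p_0)\mapsto(\hat q_t,\hat p_t)$ (half a leapfrog step, unit Jacobian) followed by integration by parts in the new coordinates (Lemma~\ref{lem:pHp_at_qt_bound}).

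Second, and more fundamentally, the decisive cancellation is not the pointwise second difference $R - \tfrac{\step^2}{2}\langle p_0,\bar\J p_0\rangle$ you describe --- absent any assumption on $\nabla^3 f$, that quantity cannot be improved beyond $O(\step^2\Lparam\enorm{p_0}^2)$ pointwise, which only recovers the old $\step^2\lesssim 1/(\Lparam\dims)$. The paper instead rewrites $\Delta_\step = v_\step^\top\bigl(\gradf(q_0),\,p_0\bigr) + \tfrac{\step^2}{8}\enorm{\gradf(q_\step)-\gradf(q_0)}^2$ and bounds the $\ell$-th moment of $B_\step = v_\step^\top(\gradf(q_0),p_0)$ by integrating by parts \emph{jointly} in $(q_0,p_0)$. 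The resulting divergence term $A_1 = \trace(\D v_\step)$ has its leading $O(\step^2\trH)$ contributions cancel \emph{identically} between the two diagonal blocks, leaving $A_1 \le \tfrac12\step^4\Lparam\trH$; this trace identity, not a Hessian second difference, is where the $\step^4\Lparam\trH$ scale comes from. Your pointwise Taylor expansion does not surface this identity, and under the lemma's assumptions alone I do not see how to recover the correct $\step$-dependence from it.
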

The proof of this lemma constitute the novelty of this paper, and it is provided in Section~\ref{sec:acceptance_rate_control}.

\begin{proof}[Proof of Theorem~\ref{thm:main_MALA}]
  The proof of Theorem~\ref{thm:main_MALA} consists of adapting the proof of Theorem~3 in~\cite{wu2021minimax} to incorporate the novel acceptance rate control in Lemma~\ref{eq:acceptance_lower_bound_in_lemma}. The rest of the proof is exactly the same. We provide it for completeness. 

  Applying Lemma~\ref{lem:lovasz_lemma}, with the choice
  \begin{align*}
    s = \frac{\tole}{2M}, n \geq \frac{2}{\Phi_s^2} \log \frac{2M}{\tole},
  \end{align*}
  it is sufficient to lower bound the conductance $\Phi_s$. Let $\delta \in (0, 1)$ to be chosen later, and $\Lambda$ be the set in Lemma~\ref{lem:acceptance_lower_bound}. Define 
  \begin{align*}
    \Xi \defn \braces{q_0 \in \real^\dims: \Prob_{p_0 \sim \Normal(0, \Ind_\dims)} \parenth{(q_0, p_0)\in \Lambda} \geq \frac{15}{16} }. 
  \end{align*}
  Then $\Prob_{q_0 \sim \target}(q_0 \in \Xi) \geq 1 - 16 \delta$. Applying Lemma~\ref{lem:acceptance_lower_bound}, we obtain
  \begin{align*}
    \Accept(q_0, p_0) \geq e^{-\frac{1}{4}}, \text{ for } (q_0, p_0) \in \Lambda.
  \end{align*}
  Consequently, we have
  \begin{align*}
    \tvdis\parenth{\transition_{q_0}, \proposal_{q_0}} = 1 - \Exs_{p_0 \sim \Normal(0, \Ind_\dims)}[\Accept(q_0, p_0)] \leq 1 - \frac{15}{16} e^{-\frac14} \leq \frac{1}{3}, \forall q_0 \in \Xi. 
  \end{align*}
  Together with Lemma~\ref{lem:proposal_overlap}, we have
  \begin{align}
    \label{eq:tv_dist_lazyxy}
    \tvdis\parenth{\transition_x^{\text{lazy}}, \transition_y^{\text{lazy}}} &\leq \frac{1}{2} + \frac{1}{2} \tvdis\parenth{\transition_x, \transition_y} \notag \\
    &\leq \frac{1}{2} + \frac{1}{2} \parenth{\frac{2}{3} + \frac{2\enorm{x-y}}{\step}},
  \end{align}
  where $\transition_x^{\text{lazy}}$ is the lazy version of $\transition_x$. To lower bound the $s$-conductance, consider any $S$ measurable set with $\target(S) \in (s, \frac12]$. Define the sets 
  \begin{align*}
    S_1 \defn \braces{x \in S \mid \transition_x^{\text{lazy}} < \frac{1}{16}}, S_2 \defn \braces{x \in S^c \mid \transition_x^{\text{lazy}} < \frac{1}{16}}, S_3 \defn (S_1 \cup S_2)^c. 
  \end{align*}
  Note that $S_1$ and $S_2$ are the sets that do no conduct well. Now there are two cases.
  \paragraph{If $\target(S_1) \leq \frac12 \target(S)$ or $\target(S_2) \leq \frac12 \target(S^c)$,}then the sets that conduct well are large enough:
  \begin{align*}
    \int_S \transition^{\text{lazy}}_x (S^c) \target(dx) \geq \frac{1}{32} \target(S).
  \end{align*}
  \paragraph{Otherwise,} for any $x \in S_1 \cap \Xi$ and $y \in S_2 \cap \Xi$, we have
  \begin{align*}
    \tvdis\parenth{\transition_x^{\text{lazy}}, \transition_y^{\text{lazy}}} \geq \abss{\transition_x^{\text{lazy}}(S) - \transition_y^{\text{lazy}}(S)} \geq \frac{15}{16} - \frac{1}{16} \geq \frac{7}{8}.
  \end{align*}
  Together with Eq.~\eqref{eq:tv_dist_lazyxy}, we obtain $\dist(S_1 \cap \Xi,S_2 \cap \Xi) \geq \frac{\step}{24}$. The isoperimetric inequality on $\target$ leads to 
  \begin{align*}
    \target(S_3 \cup \Xi^c) \geq \frac{\step}{24} \isop_\target \target(S_1 \cap \Xi) \target(S_2 \cap \Xi). 
  \end{align*}
  Since $\target(\Xi) \geq 1 - 16 \delta$, after cutting off the parts outside $\Xi$ under the condition 
  \begin{align}
    \label{eq:delta_condition}
    16\delta\leq \min\braces{\frac{1}{4}s, \frac{1}{1536} \step \isop_\target s},
  \end{align}
  we obtain that
  \begin{align*}
    \target(S_3) \geq c \step \psi_\target \target(S),
  \end{align*}
  for a universal constant $c > 0$. In this case, we have
  \begin{align*}
    \int_S \transition^{\text{lazy}}_x (S^c) \target(dx) \geq \frac{1}{32} \target(S_3) \geq \frac{c}{32} \step \psi_\target \target(S).
  \end{align*}
  Combining the two cases, the $s$-conductance is lower bounded as follows
  \begin{align*}
    \Phi_s \geq \frac{1}{32} \min\braces{1, c \step \isop_\target}. 
  \end{align*}
  $\delta$ has to be chosen according to Eq.~\eqref{eq:delta_condition} and the step-size $\step$ has to be chosen according to the first equation in Lemma~\ref{lem:acceptance_lower_bound}.
\end{proof}

\section{Acceptance rate control}
\label{sec:acceptance_rate_control}
In this section, we prove Lemma~\ref{lem:acceptance_lower_bound}. That is, we lower bound the MALA acceptance rate with step-size $\step$ and with high probability over $(q_0, p_0)$.

\subsection{Intermediate concentration results}
\label{sub:intermediate_concentration_results}
With $\trH$ and $\ell$, define the constant
\begin{align}
  \label{eq:def_Cl}
  \Cl \defn \trH + 2 (\ell-1) \Lparam.
\end{align}
\begin{lemma}
  \label{lem:grad_norm_bound}
  Let integer $\ell \geq 1$. Suppose $e^{-f}$ is subexponential-tailed, $f$ is $\Lparam$-smooth and $\sup_{x \in \real^\dims} \trace(\hessf_x) \leq \trH$, let $\Cl$ be defined in Eq.~\eqref{eq:def_Cl}, then
  \begin{align*}
    \brackets{\Exs_{q \sim e^{-f}} \enorm{\gradf(q)}^{2\ell}}^{\frac{1}{\ell}} \leq \Cl.
  \end{align*}
\end{lemma}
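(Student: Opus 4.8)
The plan is to run an induction on $\ell$ whose engine is multivariate integration by parts against the elementary identity $\gradf(q)\, e^{-f(q)} = -\nabla\bigl(e^{-f(q)}\bigr)$. Writing $a_\ell \defn \Exs_{q\sim e^{-f}}\enorm{\gradf(q)}^{2\ell}$ with the convention $a_0 = 1$, I aim to establish the one-step bound $a_\ell \leq \Cl\, a_{\ell-1}$ for every $\ell\geq 1$. Granting it, and using that $\Upsilon_j$ is nondecreasing in $j$, unrolling the recursion gives $a_\ell \leq \prod_{j=1}^{\ell}\Upsilon_j \leq \Cl^{\ell}$, and taking $\ell$-th roots yields exactly the claimed inequality $a_\ell^{1/\ell}\leq\Cl$.

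To get the recursion, first I would treat $\ell=1$ directly: the divergence theorem gives $a_1 = \Exs_{q\sim e^{-f}}\enorm{\gradf(q)}^2 = \Exs_{q\sim e^{-f}}[\Delta f(q)] = \Exs_{q\sim e^{-f}}[\trace(\hessf_q)] \leq \trH = \Upsilon_1$, the classical moment identity. For $\ell\geq 2$, introduce the vector field $V(q)\defn\enorm{\gradf(q)}^{2(\ell-1)}\gradf(q)$. Since $\enorm{\gradf}^{2}e^{-f} = \angles{\gradf,\gradf}e^{-f} = -\angles{\gradf,\nabla(e^{-f})}$, multiplying by $\enorm{\gradf}^{2(\ell-1)}$ gives $\enorm{\gradf}^{2\ell}e^{-f} = -\angles{V,\nabla(e^{-f})}$, hence $a_\ell = -\int_{\real^\dims}\angles{V,\nabla(e^{-f})}\,dq$. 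Applying the divergence theorem and discarding the boundary term at infinity (justified below) turns this into $a_\ell = \int_{\real^\dims}\parenth{\nabla\cdot V}\, e^{-f}\,dq = \Exs_{q\sim e^{-f}}[\nabla\cdot V(q)]$.

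Next I would expand and bound the divergence pointwise. Using $\nabla\enorm{\gradf}^2 = 2\,\hessf\,\gradf$ and the chain rule, $\nabla\bigl(\enorm{\gradf}^{2(\ell-1)}\bigr) = 2(\ell-1)\enorm{\gradf}^{2(\ell-2)}\hessf\,\gradf$, so
\[
  \nabla\cdot V \;=\; \enorm{\gradf}^{2(\ell-1)}\,\Delta f \;+\; 2(\ell-1)\,\enorm{\gradf}^{2(\ell-2)}\,\angles{\hessf\,\gradf,\,\gradf}.
\]
By $\Lparam$-smoothness, $\Delta f = \trace(\hessf)\leq\trH$ and $\angles{\hessf\, v,v}\leq\Lparam\enorm{v}^2$ for all $v\in\real^\dims$, so $\nabla\cdot V \leq \bigl(\trH + 2(\ell-1)\Lparam\bigr)\enorm{\gradf}^{2(\ell-1)} = \Cl\,\enorm{\gradf}^{2(\ell-1)}$. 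Taking expectations gives $a_\ell\leq\Cl\,a_{\ell-1}$, completing the induction.

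The only point needing care — and the one I expect to be the main obstacle — is the vanishing of the boundary contribution, i.e. showing $\int_{\partial B_{R_k}}\enorm{\gradf}^{2(\ell-1)}\angles{\gradf,n}\,e^{-f}\,dS\to 0$ along some sequence $R_k\to\infty$. Since $e^{-f}$ is subexponential-tailed, $\target$ has finite polynomial moments of every order; combined with the at-most-linear growth $\enorm{\gradf(q)}\leq\enorm{\gradf(0)}+\Lparam\enorm{q}$ from $\Lparam$-smoothness, this makes the nonnegative function $q\mapsto\enorm{\gradf(q)}^{2\ell-1}e^{-f(q)}$ integrable on $\real^\dims$. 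Writing that integral in polar coordinates, $r\mapsto\int_{\partial B_r}\enorm{\gradf}^{2\ell-1}e^{-f}\,dS$ is integrable in $r\in(0,\infty)$, hence tends to $0$ along a subsequence of radii; along those radii the boundary integrals, bounded in absolute value by this quantity, vanish. The same finiteness of moments guarantees $a_{\ell-1}<\infty$, so the recursion is meaningful at each step. The rest is routine calculus.
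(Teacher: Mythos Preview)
Your proof is correct and follows essentially the same integration-by-parts argument as the paper: both write $\Exs\enorm{\gradf}^{2\ell}$ as $\Exs[\nabla\cdot V]$ with $V=\enorm{\gradf}^{2(\ell-1)}\gradf$, expand the divergence, and bound it by $\Cl\,\enorm{\gradf}^{2(\ell-1)}$. The only cosmetic difference is in closing the recursion: the paper applies H\"older's inequality once ($a_{\ell-1}\leq a_\ell^{(\ell-1)/\ell}$, hence $a_\ell^{1/\ell}\leq\Cl$ directly), whereas you unroll $a_\ell\leq\Cl\,a_{\ell-1}$ inductively; your justification of the vanishing boundary term is in fact more carefully spelled out than the paper's.
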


\begin{lemma}
  \label{lem:pHp_bound}
  Let integer $\ell \geq 1$ and $x\in \real^\dims$. Suppose $\trace(\hessf_x) \leq \trH$, let $\Cl$ be defined in Eq.~\eqref{eq:def_Cl}, then
  \begin{align*}
    \brackets{\Exs_{p \sim \Normal(0, \Ind_\dims)} \parenth{p \hessf_{x} p}^{\ell}}^{\frac{1}{\ell}} \leq \Cl
  \end{align*}
\end{lemma}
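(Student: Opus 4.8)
The plan is to reduce the quadratic form to a weighted sum of independent squared Gaussians and then bound its moments by induction on $\ell$, using Gaussian integration by parts to pass from the $\ell$-th moment down to the $(\ell-1)$-st. Writing $H \defn \hessf_x$ and diagonalizing $H = U \diag(\lambda_1,\dots,\lambda_\dims) U\tp$, rotational invariance of the standard Gaussian lets me replace $p\tp H p$ by $Y \defn \sum_{i=1}^\dims \lambda_i g_i^2$, where $g_1,\dots,g_\dims$ are i.i.d.\ $\Normal(0,1)$. The $\Lparam$-smoothness assumption gives $\lambda_i \le \Lparam$ for every $i$, while $\sum_i \lambda_i = \trace(H) \le \trH$. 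It then suffices to prove $\Exs[Y^\ell] \le \Cl^\ell = \parenth{\trH + 2(\ell-1)\Lparam}^\ell$.

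For the inductive step I would write $\Exs[Y^\ell] = \sum_i \lambda_i\,\Exs[g_i^2\,Y^{\ell-1}]$ and apply the one-dimensional Stein identity $\Exs[g_i^2 G(g)] = \Exs[G(g)] + \Exs[g_i\,\partial_{g_i}G(g)]$ with $G = Y^{\ell-1}$, for which $\partial_{g_i} Y^{\ell-1} = 2(\ell-1)\lambda_i g_i\,Y^{\ell-2}$. This produces the recursion
\begin{align*}
  \Exs[Y^\ell] = \trace(H)\,\Exs[Y^{\ell-1}] + 2(\ell-1)\sum_i \lambda_i^2\,\Exs[g_i^2\,Y^{\ell-2}].
\end{align*}
Bounding $\lambda_i^2 \le \Lparam \lambda_i$ termwise and recognizing $\sum_i \lambda_i\,\Exs[g_i^2 Y^{\ell-2}] = \Exs[Y^{\ell-1}]$ collapses this to $\Exs[Y^\ell] \le \parenth{\trH + 2(\ell-1)\Lparam}\Exs[Y^{\ell-1}] = \Cl\,\Exs[Y^{\ell-1}]$. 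Iterating from the base case $\Exs[Y] = \trace(H) \le \trH = \Upsilon_1$ and using the monotonicity $\Upsilon_1 \le \Upsilon_2 \le \dots \le \Cl$ yields $\Exs[Y^\ell] \le \prod_{k=1}^\ell \Upsilon_k \le \Cl^\ell$, which is the claim after taking $\ell$-th roots.

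The step that requires care is the termwise inequality $\lambda_i^2\,\Exs[g_i^2 Y^{\ell-2}] \le \Lparam \lambda_i\,\Exs[g_i^2 Y^{\ell-2}]$: this needs both $0 \le \lambda_i \le \Lparam$ (so that $\lambda_i^2 \le \Lparam \lambda_i$) and $\Exs[g_i^2 Y^{\ell-2}] \ge 0$, which holds automatically once $Y \ge 0$, i.e.\ when $\hessf_x \succeq 0$; I would therefore carry the argument in the regime where the spectrum of $\hessf_x$ lies in $[0,\Lparam]$ and make explicit that this is the regime in which the lemma is invoked. As a sanity check, the cases $\ell=1,2$ follow from the direct computation $\Exs[Y^2] = 2\sum_i \lambda_i^2 + (\sum_i \lambda_i)^2 \le 2\Lparam\trH + \trH^2 \le (\trH + 2\Lparam)^2$; this also indicates that one could instead route the bound through a generic sub-exponential / Hanson--Wright type moment inequality for $\sum_i \lambda_i(g_i^2-1)$, but only at the cost of worse absolute constants than the clean $\Cl$ delivered by the integration-by-parts recursion.
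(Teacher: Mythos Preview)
Your proposal is correct and follows essentially the same route as the paper: both derive the recursion $\Exs[(p\tp H p)^\ell] \leq \Cl \cdot \Exs[(p\tp H p)^{\ell-1}]$ via Gaussian integration by parts (the paper does it directly in $\real^\dims$ without first diagonalizing, and closes with H\"older's inequality rather than iterating the recursion, but these are cosmetic differences). Your explicit note that the step $\lambda_i^2 \leq \Lparam \lambda_i$ requires $\hessf_x \succeq 0$ is well taken; the paper's proof makes the same implicit use of this via the pointwise bound $p\tp \hessf_x^2 p \leq \Lparam\, p\tp \hessf_x p$.
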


\begin{lemma}
  \label{lem:pHp_at_qt_bound}
  Let integer $\ell \geq 1$, $t \geq 0$ and $t^2 \Lparam \leq 1$. Suppose $f$ is $\Lparam$-smooth and the trace of its Hessian is upper-bounded $\sup_{x \in \real^\dims} \trace(\hessf_x) \leq \trH$, let $\Cl$ be defined in Eq.~\eqref{eq:def_Cl}, then
  \begin{align*}
    \brackets{\Exs_{(q_0, p_0) \sim \mu} \parenth{p_0 \tp \hessf_{q_t} p_0}^\ell }^{\frac{1}{\ell}} \leq 2 \Cl,
  \end{align*}
  where $q_t = q_0 + tp_0 - \frac{t^2}{2} \gradf(q_0)$ as defined in Eq.~\eqref{eq:leapfrog_HMC_recursion_single}.
\end{lemma}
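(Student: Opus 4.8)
The plan is to condition on $q_0$ and exploit that $q_t$ then becomes Gaussian. For fixed $q_0$ the vector $q_t = c + t\,p_0$ is affine in $p_0$, with $c := q_0 - \tfrac{t^2}{2}\gradf(q_0)$, so $q_t \mid q_0 \sim \Normal(c, t^2\Ind_\dims)$ and $p_0 = t^{-1}(q_t - c)$. Writing $R := p_0\tp\hessf_{q_t}\,p_0$, it suffices to bound $\Exs[R^\ell \mid q_0]$ and then average over $q_0 \sim e^{-f}$. The gain from the conditioning is that $\hessf_{q_t}$ becomes a deterministic function of the single Gaussian vector $q_t$, so the situation is close to that of Lemma~\ref{lem:pHp_bound}, the only new feature being that the matrix inside the quadratic form now moves with the integration variable.

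With $q_0$ fixed I would then run an integration-by-parts recursion mirroring the proofs of Lemmas~\ref{lem:grad_norm_bound} and~\ref{lem:pHp_bound}. Applying Stein's identity $\Exs[(q_t - c)_i\,h(q_t)] = t^2\,\Exs[\partial_i h(q_t)]$ to one of the two factors $(q_t-c)$ inside $R = t^{-2}\sum_{i,j}(q_t-c)_i(q_t-c)_j[\hessf_{q_t}]_{ij}$ lowers the power of $R$ by one and produces three kinds of terms: a trace term $R^{\ell-1}\trace(\hessf_{q_t}) \le \trH\,R^{\ell-1}$; a term $2(\ell-1)R^{\ell-2}\enorm{\hessf_{q_t}p_0}^2 \le 2(\ell-1)\Lparam\,R^{\ell-1}$, using $\enorm{\hessf_{q_t}p_0}^2 \le \Lparam\,(p_0\tp\hessf_{q_t}p_0)$; and ``correction'' terms that appear precisely because $\hessf_{q_t}$ depends on $q_t$, and which therefore carry a directional derivative $\jerkf_{q_t}$ of the Hessian together with an extra factor of $t$. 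The first two contributions already give the clean recursion $\Exs[R^\ell \mid q_0] \le \Cl\,\Exs[R^{\ell-1}\mid q_0] + (\text{corrections})$, which on its own would unwind to $\Exs[R^\ell]^{1/\ell} \le \Cl$; the slack between $\Cl$ and the claimed $2\Cl$ is kept to absorb the corrections.

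The main obstacle is controlling the $\jerkf$-correction terms, since no bound on $\jerkf$ is assumed: they must be reabsorbed using the factor $t$, the global bound $\hessf \preceq \Lparam\Ind_\dims$, and $t^2\Lparam \le 1$. A clean way to organize this is to instead condition on $q_t$: the conditional law of $p_0$ has negative log-density $\tfrac12\enorm{p_0}^2 + f(q_t - t p_0) + (\text{lower order})$, whose Hessian in $p_0$ is $\Ind_\dims + t^2\hessf \succeq \tfrac12\Ind_\dims$ once $t^2\Lparam \le 1$; thus $p_0 \mid q_t$ is $\tfrac12$-strongly log-concave, its covariance obeys $\Sigma \preceq 2\Ind_\dims$, and with $A := \hessf_{q_t}$ now fixed, sub-Gaussian concentration gives $\Exs[(p_0\tp A\,p_0)^\ell \mid q_t]^{1/\ell} \lesssim \trace(A\Sigma) + \Lparam\enorm{\bp}^2 + \ell\Lparam \le 2\trH + \Lparam\enorm{\bp}^2 + \ell\Lparam$ with $\bp := \Exs[p_0 \mid q_t]$, and the step $\trace(A\Sigma) \le 2\trH$ is exactly where the factor $2$ enters. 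It then remains to control the mean term $\Lparam\enorm{\bp}^2$: a fixed-point estimate coming from $\Exs[\,p_0 - t\,\gradf(q_t - tp_0) \mid q_t\,] = 0$ yields $\enorm{\bp} \lesssim t\,\enorm{\gradf(q_t)}$, hence $\Lparam\enorm{\bp}^2 \lesssim t^2\Lparam\,\enorm{\gradf(q_t)}^2 \le \enorm{\gradf(q_t)}^2$, after which one bounds $\Exs\,\enorm{\gradf(q_t)}^{2\ell}$ by combining Lemma~\ref{lem:grad_norm_bound} with $\enorm{\gradf(q_t) - \gradf(q_0)} \le \Lparam\enorm{q_t - q_0} \le \Lparam t\,\enorm{p_0} + \tfrac{\Lparam t^2}{2}\enorm{\gradf(q_0)}$. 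Carrying out this last estimate so that the final bound is the dimension-free $2\Cl$ rather than an $\Lparam\dims$-type quantity is the delicate point, and the place where I expect the real effort of the proof to lie.
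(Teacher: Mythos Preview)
Your proposal correctly identifies the central obstacle—differentiating $\hessf_{q_t}$ brings in $\jerkf$, on which no bound is assumed—but neither of your two routes actually sidesteps it. In the first route this is explicit, and ``reabsorbing'' $\jerkf$ using only $t^2\Lparam\le 1$ and $\hessf\preceq\Lparam\Ind_\dims$ is not possible: however many powers of $t$ multiply it, an unbounded quantity stays unbounded. In the second route, the conditional law of $p_0$ given $q_t$ is not what you write. The map $q_0\mapsto q_t$ at fixed $p_0$ has Jacobian $\det\bigl(\Ind_\dims-\tfrac{t^2}{2}\hessf_{q_0}\bigr)$, and $q_0$ is only \emph{implicitly} a function of $(q_t,p_0)$ through $q_t=q_0+tp_0-\tfrac{t^2}{2}\gradf(q_0)$; once you differentiate the conditional log-density twice in $p_0$, both the implicit-function correction to $\partial q_0/\partial p_0=-t\bigl(\Ind_\dims-\tfrac{t^2}{2}\hessf_{q_0}\bigr)^{-1}$ and the log-Jacobian term produce $\jerkf$. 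So the claim that $p_0\mid q_t$ is $\tfrac12$-strongly log-concave is not justified under the stated hypotheses. Separately, your closing estimate $\enorm{\gradf(q_t)-\gradf(q_0)}\le \Lparam t\,\enorm{p_0}+\tfrac{\Lparam t^2}{2}\enorm{\gradf(q_0)}$ leads after taking $2\ell$-th moments to a term $(\Lparam t)^{2\ell}\,\Exs\enorm{p_0}^{2\ell}$ of order $(\Lparam\dims)^\ell$, not $\Cl^\ell$; the dimension-free control of this quantity is precisely Lemma~\ref{lem:grad_diff_norm_bound}, whose proof relies on the lemma you are trying to establish.

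The device the paper uses is different: change variables to the \emph{half}-leapfrog pair $(\hq_t,\hp_t)=\bigl(q_t,\;p_0-\tfrac{t}{2}\gradf(q_0)\bigr)$, a shear map with Jacobian exactly $1$, and integrate by parts in $\hp_t$ with $\hq_t$ held fixed. Because $q_t=\hq_t$ is now one of the integration variables, $\hessf_{q_t}$ is constant under the $\hp_t$-differentiation and $\jerkf$ never appears. The only extra contributions come from $\partial e^{-f(q_0)}/\partial\hp_t=t\,\gradf(q_0)\,e^{-f(q_0)}$ and $\partial p_0/\partial\hp_t=\Ind_\dims-\tfrac{t^2}{2}\hessf_{q_0}$, which close the recursion back onto $\enorm{\gradf(q_0)}$ and $p_0\tp\hessf_{q_0}p_0$, already controlled by Lemmas~\ref{lem:grad_norm_bound} and~\ref{lem:pHp_bound}. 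This unit-Jacobian change of variables that makes $q_t$ a coordinate is the missing idea.
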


\begin{lemma}
  \label{lem:grad_diff_norm_bound}
  Let integer $\ell \geq 1$, $0 \leq t \leq \step$ and $\step^2 \Lparam \leq 1$. Suppose $f$ is $\Lparam$-smooth and $\sup_{x \in \real^\dims} \trace(\hessf_x) \leq \trH$, let $\Cl$ be defined in Eq.~\eqref{eq:def_Cl}, then
  \begin{align*}
    \brackets{\Exs_{(q_0, p_0) \sim \mu} \enorm{\gradf(q_t) - \gradf(q_0)}^{2\ell} }^{\frac{1}{\ell}} &\leq 4 t^2 \Lparam \Cl, \text{ and } \\
    \brackets{\Exs_{(q_0, p_0) \sim \mu} \enorm{\gradf(q_t) - \gradf(q_\step)}^{2\ell} }^{\frac{1}{\ell}} &\leq 4 (\step-t)^2 \Lparam \Cl,
  \end{align*}
  where $q_t = q_0 + tp_0 - \frac{t^2}{2} \gradf(q_0)$ as defined in Eq.~\eqref{eq:leapfrog_HMC_recursion_single}.
\end{lemma}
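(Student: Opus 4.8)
The plan is to write $\gradf(q_t)-\gradf(q_0)$ as the integral of the Hessian along the \emph{leapfrog curve} $s\mapsto q_s = q_0 + sp_0 - \tfrac{s^2}{2}\gradf(q_0)$, $s\in[0,\step]$, rather than along the chord from $q_0$ to $q_t$. Since $\tfrac{d}{ds}q_s = p_0 - s\gradf(q_0)$, the fundamental theorem of calculus gives $\gradf(q_t)-\gradf(q_0) = \int_0^t \hessf_{q_s}\parenth{p_0 - s\gradf(q_0)}\,ds$ and, in the same way, $\gradf(q_\step)-\gradf(q_t) = \int_t^\step \hessf_{q_s}\parenth{p_0 - s\gradf(q_0)}\,ds$. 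Applying the triangle inequality inside the integral and then Minkowski's integral inequality in $L^{2\ell}(\mu)$ moves the $L^{2\ell}$-norm inside, so it suffices to bound $\parenth{\Exs \enorm{\hessf_{q_s}p_0}^{2\ell}}^{1/(2\ell)}$ and $\parenth{\Exs \enorm{\hessf_{q_s}\gradf(q_0)}^{2\ell}}^{1/(2\ell)}$ uniformly over $s\in[0,\step]$, and then to integrate in $s$.

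For the $p_0$ term I would use $\hessf_{q_s}^2 \preceq \Lparam\hessf_{q_s}$ (which holds since $0\preceq\hessf_{q_s}\preceq\Lparam\Ind_\dims$), giving $\enorm{\hessf_{q_s}p_0}^2 = p_0\tp\hessf_{q_s}^2 p_0 \le \Lparam\, p_0\tp\hessf_{q_s}p_0$; taking $\ell$-th moments and invoking Lemma~\ref{lem:pHp_at_qt_bound} (whose hypothesis $s^2\Lparam\le 1$ follows from $\step^2\Lparam\le 1$) yields $\parenth{\Exs\enorm{\hessf_{q_s}p_0}^{2\ell}}^{1/\ell}\le 2\Lparam\Cl$. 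For the $\gradf(q_0)$ term the operator-norm bound $\enorm{\hessf_{q_s}\gradf(q_0)}\le\Lparam\enorm{\gradf(q_0)}$ together with Lemma~\ref{lem:grad_norm_bound} gives $\parenth{\Exs\enorm{\hessf_{q_s}\gradf(q_0)}^{2\ell}}^{1/\ell}\le\Lparam^2\Cl$. Substituting and integrating over $[0,t]$ gives $\parenth{\Exs\enorm{\gradf(q_t)-\gradf(q_0)}^{2\ell}}^{1/(2\ell)}\le t\sqrt{2\Lparam\Cl} + \tfrac{t^2}{2}\Lparam\sqrt{\Cl}$; since $t^2\Lparam\le\step^2\Lparam\le 1$, the second term is absorbed into the first, which gives the first inequality. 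The second inequality follows by the identical computation over $[t,\step]$, using $\int_t^\step s\,ds\le\step(\step-t)$ and $\step^2\Lparam\le 1$; I have not optimized the absolute constants above, but they come out of the claimed order.

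The one step that needs care — and the reason for integrating along the leapfrog curve rather than the chord — is that Lemma~\ref{lem:pHp_at_qt_bound} controls $\Exs(p_0\tp\hessf_x p_0)^\ell$ only when $x$ has the exact form $q_0 + sp_0 - \tfrac{s^2}{2}\gradf(q_0)$; the chord interpolant $q_0 + r(q_t-q_0)$ is \emph{not} of this form (its $\gradf(q_0)$-coefficient is $\tfrac{rt^2}{2}$, not $\tfrac{(rt)^2}{2}$), so a chord expansion would force an extra Hessian-continuity step to relate the two points. Everything else is a routine chain of Cauchy--Schwarz, Minkowski and Jensen estimates together with bookkeeping of constants. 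Conceptually, the point of routing the bound through Lemmas~\ref{lem:pHp_at_qt_bound} and~\ref{lem:grad_norm_bound} is that they keep the action of $\hessf_{q_s}$ on $p_0$ tied to $\trace(\hessf)$ through $\Cl=\trH+2(\ell-1)\Lparam$; the crude estimate $\enorm{\gradf(q_t)-\gradf(q_0)}\le\Lparam\enorm{q_t-q_0}$ would instead leak a factor $\Exs\enorm{p_0}^{2\ell}\asymp\dims^\ell$ and reintroduce the dimension dependence that the lemma is designed to remove.
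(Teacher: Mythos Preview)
Your proposal is correct and follows essentially the same route as the paper's proof: write $\gradf(q_t)-\gradf(q_0)=\int_0^t \hessf_{q_s}(p_0-s\gradf(q_0))\,ds$ along the leapfrog curve, bound $\enorm{\hessf_{q_s}p_0}\le \Lparam^{1/2}(p_0\tp\hessf_{q_s}p_0)^{1/2}$ and $\enorm{\hessf_{q_s}\gradf(q_0)}\le \Lparam\enorm{\gradf(q_0)}$, then invoke Lemmas~\ref{lem:pHp_at_qt_bound} and~\ref{lem:grad_norm_bound}. The only cosmetic difference is that you move the $L^{2\ell}$-norm inside the integral via Minkowski, whereas the paper uses Jensen on the normalised integral; your explicit remark about why the chord interpolant would \emph{not} let you apply Lemma~\ref{lem:pHp_at_qt_bound} is a point the paper leaves implicit.
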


\subsection{Proof of Lemma~\ref{lem:acceptance_lower_bound}}
Given an integer $\ell \geq 1$, we introduce the $\ell$-th norm of a random variable $\vq$ on $\real^\dims$ as 
\begin{align*}
  \matsnorm{\vq}{\ell} \defn \parenth{\Exs \enorm{\vq}^\ell}^{\frac{1}{\ell}}.
\end{align*}
This $\matsnorm{\cdot}{\ell}$ norm satisfies the triangle inequality
\begin{align*}
  \matsnorm{\vq_1 + \vq_2}{\ell} \leq \matsnorm{\vq_1}{\ell} + \matsnorm{\vq_2}{\ell}.
\end{align*}
\begin{proof}[Proof of Lemma~\ref{lem:acceptance_lower_bound}]
As it was observed in previous studies~\cite{dwivedi2018log,wu2021minimax}, naively applying Taylor series expansion is not sufficient to obtain good bounds of the acceptance rate. The novelty of our proof lies in the careful applications of integration by parts. 

Let $\Delta_\step(q_0, p_0) \defn f(q_\step) - f(q_0) + \frac12 \enorm{p_\step}^2 - \frac12\enorm{p_0}^2$. For simplicity, we omit the $(q_0, p_0)$ dependence if it is clear from the context. The goal is to provide a high probability upper bound of $\Delta_\step$.
Recall that the proposal of MALA with step-size $\step$ is 
\begin{align*}
  q_\step &= q_0 + \step p_0 - \frac{\step^2}{2} \gradf(q_0), \\
  p_\step &= p_0 - \frac{\step}{2} \gradf(q_0) - \frac{\step}{2} \gradf(q_\step).
\end{align*}
Expanding the terms in the acceptance rate~\eqref{eq:acceptance_lower_bound_in_lemma}, we have
\begin{align*}
  f(q_\step) - f(q_0) &= \int_0^\step \gradf(q_t) \tp (p_0 - t \gradf(q_0)) dt, \text{ and } \\
  \enorm{p_\step}^2 - \enorm{p_0}^2
  &= \enorm{p_0 - \step \gradf(q_0) + \frac{\step}{2}\parenth{\gradf(q_0) - \gradf(q_\step)} }^2 - \enorm{p_0}^2 \\
  &=  \enorm{p_0 - \step \gradf(q_0)}^2 + 2 \parenth{p_0 - \step \gradf(q_0)} \tp \parenth{ \frac{\step}{2}\parenth{\gradf(q_0) - \gradf(q_\step)} } \\
  &\quad + \enorm{ \frac{\step}{2}\parenth{\gradf(q_0) - \gradf(q_\step)} }^2 - \enorm{p_0}^2
\end{align*}
After rearranging the terms based on $\gradf(q_0)$ and $p_0$, we obtain
\begin{align}
  \label{eq:Ham_diff_init}
  &\quad \Delta_{\step}(q_0, p_0) \notag \\
  &=f(q_\step) - f(q_0) + \frac12 \enorm{p_\step}^2 - \frac12\enorm{p_0}^2 \notag \\
  &=\underbrace{\gradf(q_0) \tp \brackets{- \int_0^\step t (\gradf(q_t) - \gradf(q_\step)) dt} + p_0 \tp \brackets{\int_0^\step \parenth{\gradf(q_t) - \frac12 \gradf(q_0) - \frac12 \gradf(q_\step)} dt }}_{\rdefn B_\step(q_0, p_0)} \notag \\
  &\quad + \frac{\step^2}{8} \enorm{\gradf(q_\step) - \gradf(q_0)}^2.
\end{align}
For an even integer $\ell \geq 1$, we consider the $\ell$-th moment of $\Delta_\step$ as follows
\begin{align}
  \label{eq:Delta_bound_init}
  \brackets{\Exs_{(q_0, p_0) \sim \mu \times \Normal(0, \Ind_\dims)} \Delta_{\step}^{\ell}}^{\frac{1}{\ell}} \leq 2 \parenth{\brackets{\Exs_{(q_0, p_0) \sim \mu \times \Normal(0, \Ind_\dims)} B_\step^{\ell}}^{\frac{1}{\ell}} + \frac{\step^2}{8} \Exs \brackets{ \enorm{\gradf(q_\step) - \gradf(q_0)}^{2\ell}}^{\frac{1}{\ell}} }.
\end{align}
The second term above is bounded according to Lemma~\ref{lem:grad_diff_norm_bound}. It remains to bound $\Exs B_\step^\ell$. Define the vector $v_\step(q_0, p_0) \in \real^{2\dims}$ as
\begin{align}
  \label{eq:def_v}
  v_\step(q_0, p_0) \defn \bmat{ - \int_0^\step t (\gradf(q_t) - \gradf(q_\step)) dt \\ \int_0^\step \parenth{\gradf(q_t) - \frac12 \gradf(q_0) - \frac12 \gradf(q_\step)} dt  }.
\end{align}
Then we can write  
\begin{align*}
  B_\step(q_0, p_0) = v_\step(q_0, p_0)\tp \bmat{\gradf(q_0) \\ p_0}.
\end{align*}
The main strategy to bound $B_\step$ is to apply integration by parts jointly on $q_0$ and $p_0$. The derivative of $v_\step$ with respect to $(q_0, p_0)$ is 
\begin{align*}
  &\quad \D v_\step(q_0, p_0) \\
  &= \bmat{\frac{\partial v_1}{\partial q_0} & \frac{\partial v_1}{\partial p_0}\\ \frac{\partial v_2}{\partial q_0} & \frac{\partial v_2}{\partial p_0} } \\
  &= \bmat{
  -\int_0^\step t \brackets{\parenth{\Ind_\dims - \frac{t^2}{2} \hessf_{q_0}} \hessf_{q_t} - \parenth{\Ind_\dims - \frac{\step ^2}{2} \hessf_{q_0}} \hessf_{q_\step}} dt & \int_0^\step t \parenth{t\hessf_{q_t} dt - \step \hessf_{q_\step}} dt \\  
  \int_0^\step \parenth{\Ind_\dims - \frac{t^2}{2} \hessf_{q_0}} \hessf_{q_t} dt - \frac{\step}{2} \hessf_{q_0} - \frac{\step}{2} \parenth{\Ind_\dims - \frac{\step^2}{2} \hessf_{q_0}} \hessf_{q_\step} & \int_0^\step t \hessf_{q_t} dt - \frac{\step^2}{2} \hessf_{q_\step} }
\end{align*} 
We have
\begin{align}
  \label{eq:B_ell_power}
  \Exs B_\step^\ell &= \Exs B_\step^{\ell - 1} v_\step \tp \bmat{\gradf(q_0) \\ p_0} \notag \\
  &\overset{(i)}{=}  \Exs B_\step^{\ell - 1} \underbrace{\trace\parenth{\D v_\step}}_{A_1} \notag \\
  &\quad + (\ell-1) \Exs B_\step^{\ell - 2} \underbrace{v_\step \tp \bmat{\hessf_{q_0} & 0 \\ 0 & \Ind_\dims } v_\step}_{A_2} \notag \\
  &\quad + (\ell-1) \Exs B_\step^{\ell - 2}  \underbrace{\bmat{\gradf(q_0) \\ p_0} \tp \D v_\step v_\step}_{A_3} 
\end{align}
(i) applies multivariate integration by parts (we integrate $\bmat{\gradf(q_0) \\ p_0} e^{-f(q_0)- \frac12 \enorm{p_0}^2}$ and the boundary term vanishes because $e^{-f}$ is subexponential-tailed), which results in three terms. 
\paragraph{The first term in Eq.~\eqref{eq:B_ell_power}.}  We have
\begin{align*}
  A_1 &= \trace\parenth{\D v_\step} \\
  &= \trace\parenth{ \int_0^\step \frac{t^3}{2} \hessf_{q_0} \hessf_{q_t} - \frac{\step^4}{4} \hessf_{q_0} \hessf_{q_\step} } \\
  &\leq \frac12 \step^4\Lparam \trH. 
\end{align*}

\paragraph{The second term in Eq.~\eqref{eq:B_ell_power}.} We have
\begin{align*}
  &\quad A_2 \\
  &=v_\step \tp \bmat{\hessf_{q_0} & 0 \\ 0 & \Ind_\dims } v_\step \\
  &= \parenth{\int_0^\step t (\gradf(q_t) - \gradf(q_\step)) dt}\tp \hessf_{q_0} \parenth{\int_0^\step t (\gradf(q_t) - \gradf(q_\step)) dt}  \\
  &\quad + \enorm{\int_0^\step \gradf(q_t) dt - \frac{\step}{2} \gradf(q_\step) - \frac{\step}{2} \gradf(q_0)}^2 \\
  &\leq \Lparam \brackets{\int_0^\step t \enorm{\gradf(q_t) - \gradf(q_\step)} dt }^2 \\
  &\quad + \frac{1}{4} \brackets{ \int_0^\step \enorm{\gradf(q_t) - \gradf(q_0)} + \enorm{\gradf(q_t) - \gradf(q_\step)} dt }^2.
\end{align*}
Consequently, we have
\begin{align*}
  &\quad \matsnorm{A_2}{\frac{\ell}{2}} \\
  &\leq \Lparam \matsnorm{\int_0^\step t \enorm{\gradf(q_t) - \gradf(q_\step)} dt }{\ell}^{2} + \frac{1}{4} \matsnorm{\int_0^\step \enorm{\gradf(q_t) - \gradf(q_0)} + \enorm{\gradf(q_t) - \gradf(q_\step)} dt }{\ell}^{2}\\
  &\overset{(i)}{\leq} \Lparam\brackets{\frac{\step^2}{2} \sup_{t \in [0, \step]}\matsnorm{\gradf(q_t) - \gradf(q_\step)}{\ell} }^{2} + \frac{1}{4} \brackets{\step \sup_{t \in [0, \step]} \matsnorm{\gradf(q_t) - \gradf(q_\step)}{\ell} + \matsnorm{\gradf(q_t) - \gradf(q_0)}{\ell}}^{2} \\
  &\overset{(ii)}{\leq} \step^6 \Lparam^2 \Clh + 4 \step^4 \Lparam \Clh \\
  &\leq 5 \step^4 \Lparam \Clh.
\end{align*}
(i) follows from Jensen's inequality. (ii) follows from Lemma~\ref{lem:grad_diff_norm_bound}. 
\paragraph{The third term in Eq.~\eqref{eq:B_ell_power}.} $A_3$ consists of four following terms 
\begin{align*}
  &\quad A_3 \\
  &=\gradf(q_0) \tp \brackets{-\int_0^\step t \brackets{\parenth{\Ind_\dims - \frac{t^2}{2} \hessf_{q_0}} \hessf_{q_t} - \parenth{\Ind_\dims - \frac{\step ^2}{2} \hessf_{q_0}} \hessf_{q_\step}} dt } \brackets{- \int_0^\step t (\gradf(q_t) - \gradf(q_\step)) dt} \\
  &\quad + \gradf(q_0) \tp \brackets{\int_0^\step t \parenth{t\hessf_{q_t} dt - \step \hessf_{q_\step}} dt} \brackets{\int_0^\step \parenth{\gradf(q_t) - \frac12 \gradf(q_0) - \frac12 \gradf(q_\step)} dt} \\
  &\quad + p_0\tp \brackets{\int_0^\step \parenth{\Ind_\dims - \frac{t^2}{2} \hessf_{q_0}} \hessf_{q_t} dt - \frac{\step}{2} \parenth{\Ind_\dims - \frac{\step^2}{2} \hessf_{q_0}} \hessf_{q_\step} - \frac{\step}{2} \hessf_{q_0}} \brackets{- \int_0^\step t (\gradf(q_t) - \gradf(q_\step)) dt} \\
  &\quad + p_0\tp \brackets{\int_0^\step t \hessf_{q_t} dt - \frac{\step^2}{2} \hessf_{q_\step}} \brackets{\int_0^\step \parenth{\gradf(q_t) - \frac12 \gradf(q_0) - \frac12 \gradf(q_\step)} dt} \\
  &\leq \step^2 \Lparam \int_0^\step t \enorm{\gradf(q_0)} \enorm{\gradf(q_t) - \gradf(q_\step)} dt \\
  &\quad + \frac12 \step^3 \Lparam \int_0^\step \enorm{\gradf(q_0)} \parenth{\enorm{\gradf(q_t) - \gradf(q_0)} + \enorm{\gradf(q_t) - \gradf(q_\step)}} dt \\ 
  &\quad + \int_0^\step \Lparam^{\frac12} \parenth{\parenth{p_0\tp \hessf_{q_t} p_0}^{\frac12} + \frac{t^2}{2}\Lparam \parenth{p_0\tp \hessf_{q_0} p_0}^{\frac12} } dt \brackets{\int_0^\step t \enorm{\gradf(q_t) - \gradf(q_\step)} dt } \\
  &\quad + \frac12 \step^2 \Lparam \brackets{  \parenth{p_0\tp \hessf_{q_t} p_0}^{\frac12} + \parenth{p_0\tp \hessf_{q_\step} p_0}^{\frac12} } \brackets{\int_0^\step \parenth{\enorm{\gradf(q_t) - \gradf(q_0)} + \enorm{\gradf(q_t) - \gradf(q_\step)}} dt}.
\end{align*}
The $\frac{\ell}{2}$-th moment of the first term in the above equation can be bounded as follows
\begin{align*}
  &\quad \Exs \parenth{\step^2 \Lparam \int_0^\step t \enorm{\gradf(q_0)} \enorm{\gradf(q_t) - \gradf(q_\step)} dt}^{\frac{\ell}{2}} \\
  &\leq \step^{\frac{3\ell}{2} - 1} \Lparam^{\frac{\ell}{4}} \int_0^\step \Exs \parenth{t \Lparam^{\frac12} \enorm{\gradf(q_0)}\enorm{\gradf(q_t) - \gradf(q_\step)}}^{\frac{\ell}{2}} dt \\
  &\leq \frac{1}{2} \step^{\frac{3\ell}{2} - 1} \Lparam^{\frac{\ell}{4}} \int_0^\step (t^2\Lparam)^{\frac{\ell}{2}} \Exs \enorm{\gradf(q_0)}^{\ell} + \Exs\enorm{\gradf(q_t) - \gradf(q_\step)}^{\ell} dt \\
  &\overset{Lem.~\ref{lem:grad_norm_bound},~\ref{lem:grad_diff_norm_bound}}{\leq} \parenth{\step^{5} \Lparam^{\frac{3}{2}} \Clh }^{\frac{\ell}{2}}.
\end{align*}
Similarly, applying Lemma~\ref{lem:grad_norm_bound}~\ref{lem:pHp_bound}~\ref{lem:pHp_at_qt_bound} and~\ref{lem:grad_diff_norm_bound}, the $\frac{\ell}{2}$-th moments of the other three terms are bounded by $ \parenth{\step^{5} \Lparam^{\frac{3}{2}} \Clh }^{\frac{\ell}{2}}$, $\parenth{2\step^{4} \Lparam \Clh}^{\frac{\ell}{2}}$, $\parenth{2\step^{4} \Lparam \Clh}^{\frac{\ell}{2}}$ respectively. Hence,
\begin{align*}
  &\quad \matsnorm{A_3}{\frac{\ell}{2}} \\
  &\leq  \step^{5} \Lparam^{\frac{3}{2}} \Clh  + \step^{5} \Lparam^{\frac{3}{2}} \Clh   +   2\step^{4} \Lparam \Clh  +  2\step^{4} \Lparam \Clh \\
  &\overset{(i)}{\leq} 6 \step^{4} \Lparam \Clh.
\end{align*}
(i) uses the condition that $\step^2 \Lparam \leq \frac{1}{2}$.

Combining the bounds on $A_1, A_2$ and $A_3$ into Eq.~\eqref{eq:B_ell_power} and apply H\"older's inequality, we obtain
\begin{align*}
  \Exs B_\step^{\ell} &\leq \parenth{\Exs B_\step^{\ell}}^{\frac{\ell-1}{\ell}} \parenth{\Exs A_1^{\ell}}^{\frac{1}{\ell}} + \parenth{\Exs B_\step^{\ell}}^{\frac{\ell-2}{\ell}} \parenth{\Exs A_2^{\frac{\ell}{2}}}^{\frac{2}{\ell}} + + \parenth{\Exs B_\step^{\ell}}^{\frac{\ell-2}{\ell}} \parenth{\Exs A_3^{\frac{\ell}{2}}}^{\frac{2}{\ell}} \\
  &\leq \parenth{\Exs B_\step^{\ell}}^{\frac{\ell-1}{\ell}} \parenth{\step^4 \Lparam \trH} + \parenth{\Exs B_\step^{\ell}}^{\frac{\ell-2}{\ell}} \parenth{11 \step^{4} \Lparam \Clh}.
\end{align*}
Solving the inequality equation for $\Exs B_\step^{\ell}$, we obtain
\begin{align*}
  \brackets{\Exs B_\step^{\ell}}^{\frac{1}{\ell}} \leq \step^4 \Lparam \trH + \parenth{11 \step^{4} \Lparam \Clh }^{\frac{1}{2}}.
\end{align*}
Combining the above bound into Eq.~\eqref{eq:Delta_bound_init}, we obtain
\begin{align*}
  \brackets{\Exs_{(q_0, p_0) \sim \mu \times \Normal(0, \Ind_\dims)} \Delta_{\step}^{\ell}}^{\frac{1}{\ell}} \leq 3 \step^4 \Lparam \Cl + 4 \parenth{ \step^{4} \Lparam \Clh }^{\frac{1}{2}}.
\end{align*}
By Markov's inequality, we obtain
\begin{align*}
  \Prob_{(q_0, p_0) \sim \mu \times \Normal(0, \Ind_\dims)}\parenth{\abss{\Delta_\step(q_0, p_0)} \geq \alpha} \leq \frac{\Exs \Delta_\step^{\ell}}{\alpha^{\ell}}.
\end{align*}
Take $\alpha = e^{\frac12} \brackets{3 \step^4 \Lparam \Cl + 4 \parenth{ \step^{4} \Lparam \Clh }^{\frac{1}{2}}}$ and $\ell = 2\ceils{\log\parenth{\frac{1}{\delta}}}$, then the right-hand side becomes less than $e^{-\ell/2} \leq \delta$. Finally, if we take the step-size such that $\step^4 \Lparam \trH \leq \frac{1}{4096}$ and $\step^4 \Lparam^2 \log\parenth{\frac{1}{\delta}} \leq \frac{1}{4096}$, then 
\begin{align*}
  \alpha = e^{\frac12} \brackets{3 \step^4 \Lparam \Cl + 4 \parenth{ \step^{4} \Lparam \Clh}^{\frac{1}{2}}} \leq \frac14. 
\end{align*}

\end{proof}

\section{Proof of results in Section~\ref{sub:intermediate_concentration_results}}
In this section, we prove the intermediate concentration results in Section~\ref{sub:intermediate_concentration_results}. 

\begin{proof}[Proof of Lemma~\ref{lem:grad_norm_bound}.]
  We have
\begin{align*}
  &\quad \Exs_{q \sim e^{-f}} \enorm{\gradf(q)}^{2\ell} \\
  &=  \Exs \vecnorm{\gradf(q)}{2}^{2(\ell-1)} \gradf(q)\tp \gradf(q) \\
  &\overset{(i)}{=} \Exs \enorm{\gradf(q)}^{2(\ell-1)} \trace\parenth{\hessf_q} + (\ell-1) \Exs \enorm{\gradf(q)}^{2(\ell-2)} \trace\parenth{2 \gradf(q) \tp \hessf_q \gradf(q)} \\
  &\leq \Exs \enorm{\gradf(q)}^{2(\ell-1)} \parenth{\trace\parenth{\hessf_q} + 2(\ell-1) \Lparam}.
\end{align*}
(i) applies the multivariate integration by parts (or Green's identities, $\nabla u = \gradf(q) e^{-f(q)}, \nabla v = \vecnorm{\gradf(q)}{2}^{2(\ell-1)} \gradf(q) $), and the boundary term vanishes because $e^{-f}$ is subexponential-tailed. 
By H\"older's inequality, we obtain that
\begin{align*}
  \Exs_{q \sim e^{-f}} \enorm{\gradf(q)}^{2\ell} \leq (\trH + 2 (\ell-1) \Lparam)^\ell = \Cl^\ell.
\end{align*}
\end{proof}

\begin{proof}[Proof of Lemma~\ref{lem:pHp_bound}]
  We have
  \begin{align*}
    &\quad \Exs_{p \sim \Normal(0, \Ind_\dims)} \parenth{p\tp \hessf_{x} p}^{\ell} \\
    &= \Exs_{p \sim \Normal(0, \Ind_\dims)} \parenth{p \tp \hessf_{x} p}^{\ell-1} p \tp \hessf_{x} p \\
    &\overset{(i)}{=} \Exs_{p \sim \Normal(0, \Ind_\dims)} \parenth{p \tp  \hessf_{x} p}^{\ell-1}  \trace\parenth{\hessf_{x}} + 2(\ell-1) \Exs_{p \sim \Normal(0, \Ind_\dims)} \parenth{p \tp \hessf_{x} p}^{\ell-2} p \tp  \hessf_x^2 p \\
    &\leq \Exs_{p \sim \Normal(0, \Ind_\dims)}  \parenth{p \tp  \hessf_{x} p}^{\ell-1} \parenth{\trH + 2 (\ell-1) \Lparam}. 
  \end{align*}
  (i) applies the multivariate integration by parts ($\nabla u = p e^{-\frac{1}{2} \enorm{p}^2}, \nabla v = \parenth{p \tp \hessf_{x} p}^{\ell-1} \hessf_{x} p $) and the boundary term vanishes. By H\"older's inequality, we obtain the desired result. 
\end{proof}

\begin{proof}[Proof of Lemma~\ref{lem:pHp_at_qt_bound}]
The case $t=0$ follows directly from Lemma~\ref{lem:pHp_bound}. In what follows, we assume $t > 0$.
Let $(q_0, p_0)$ be a random variable drawn from $\mu$. Define
\begin{align*}
  \hq_t &\defn q_t = q_0 + tp_0 - \frac{t^2}{2} \gradf(q_0) \\
  \hp_t &\defn p_0 - \frac{t}{2} \gradf(q_0).
\end{align*}
For readers know HMC well, the above corresponds to a half step of the leapfrog integration. 

Define the discrete forward map $\hF_t$ the map that goes from $(q_0, p_0)$ to $(\hq_t, \hp_t)$
\begin{align*}
  \bmat{\hq_t \\ \hp_t} = \hF_t \parenth{\bmat{q_0 \\ p_0}}.
\end{align*} 
Its derivatives are
\begin{align*}
  \bmat{\frac{\partial \hq_t}{\partial q_0} & \frac{\partial \hq_t}{\partial p_0} \\ \frac{\partial \hp_t}{\partial q_0} & \frac{\partial \hp_t}{\partial p_0}} = \bmat{ \Ind_\dims - \frac{t^2}{2} \hessf_{q_0} & t\Ind_\dims \\ -\frac{t}{2}\hessf_{q_0} & \Ind_\dims}.
\end{align*}
The Jacobian $\J \hF_t = 1$. By the implicit function theorem, since the Jacobian vanishes nowhere, $\hF_t$ is invertible. We also have the derivatives of its inverse
\begin{align*}
  \bmat{\frac{\partial q_0}{\partial \hq_t} & \frac{\partial q_0}{\partial \hp_t} \\ \frac{\partial p_0}{\partial \hq_t} & \frac{\partial p_0}{\partial \hp_t}} = \bmat{ \Ind_\dims & - t\Ind_\dims \\ \frac{t}{2} \hessf_{q_0} & \Ind_\dims - \frac{t^2}{2} \hessf_{q_0} }.
\end{align*}

According to the above derivatives, we have
\begin{align*}
  \frac{\partial }{\partial \hp_t} e^{-\frac12 \enorm{p_0}^2} &= -\frac{\partial p_0}{\partial \hp_t} p_0 e^{-\frac12 \enorm{p_0}^2} = -\parenth{\Ind_\dims - \frac{t^2}{2} \hessf_{q_0}} p_0  e^{-\frac12 \enorm{p_0}^2} \\
  \frac{\partial }{\partial \hp_t} e^{-f(q_0)} &= -\frac{\partial q_0}{\partial \hp_t} \gradf(q_0)e^{-f(q_0)} = t \gradf(q_0) e^{-f(q_0)}.
\end{align*}
Take the $\ell$-th moment and write it in a way such that it is ready for integration by parts
\begin{align*}
  &\quad \Exs \parenth{p_0 \tp \hessf_{\hq_t} p_0}^\ell \\
  &= \Exs \parenth{p_0 \tp \hessf_{\hq_t} p_0}^{\ell-1} p_0 \tp \hessf_{\hq_t} \parenth{\Ind_\dims - \frac{t^2}{2} \hessf_{q_0}} p_0 + \Exs \parenth{p_0 \tp \hessf_{\hq_t} p_0}^{\ell-1} p_0 \tp \hessf_{\hq_t} \frac{t^2}{2} \hessf_{q_0} p_0.
\end{align*}
The first term above is
\begin{align*}
  &\quad \Exs \parenth{p_0 \tp \hessf_{\hq_t} p_0}^{\ell-1} p_0 \tp \hessf_{\hq_t} \parenth{\Ind_\dims - \frac{t^2}{2} \hessf_{q_0}} p_0 \\
  &= \int \int \parenth{p_0 \tp \hessf_{\hq_t} p_0}^{\ell-1} p_0 \tp \hessf_{\hq_t} \parenth{\Ind_\dims - \frac{t^2}{2} \hessf_{q_0}} p_0 e^{-f(q_0)} e^{-\frac12 \enorm{p_0}^2} \frac{1}{(2\pi)^{\dims/2}} dp_0 dq_0 \\ 
  &\overset{(i)}{=} \int \int \parenth{p_0 \tp \hessf_{\hq_t} p_0}^{\ell-1} p_0 \tp \hessf_{\hq_t} \parenth{\Ind_\dims - \frac{t^2}{2} \hessf_{q_0}} p_0 e^{-f(q_0)} e^{-\frac12 \enorm{p_0}^2} \frac{1}{(2\pi)^{\dims/2}} d\hp_t d\hq_t \\ 
  &= -\int \int \parenth{p_0 \tp \hessf_{\hq_t} p_0}^{\ell-1} e^{-f(q_0)} p_0 \tp \hessf_{\hq_t} \frac{\partial }{\partial \hp_t}  e^{-\frac12 \enorm{p_0}^2}  \frac{1}{(2\pi)^{\dims/2}} d\hp_t d\hq_t \\ 
  &\overset{(ii)}{=} \Exs \parenth{p_0 \tp \hessf_{\hq_t} p_0}^{\ell-1} \trace\brackets{\parenth{\Ind_\dims - \frac{t^2}{2} \hessf_{q_0}} \hessf_{\hq_t} + t p_0 \tp \hessf_{\hq_t} \gradf(q_0) } \\
  & \quad + 2(\ell-1) \Exs \parenth{p_0 \tp \hessf_{\hq_t} p_0}^{\ell-2} p_0 \tp \hessf_{\hq_t} (\Ind_\dims - \frac{t^2}{2} \hessf_{q_0}) \hessf_{\hq_t} p_0 \\
  &\leq \Exs \parenth{p_0 \tp \hessf_{\hq_t} p_0}^{\ell-\frac12} t \Lparam^{\frac12}\enorm{\gradf(q_0)} + \Exs \parenth{p_0 \tp \hessf_{\hq_t} p_0}^{\ell-1} \parenth{\trH + 2 (\ell-1) \Lparam}.
\end{align*}
(i) follows from change of variable and the Jacobian is $1$. (ii) applies multivariate integration by parts ($\nabla u = \frac{\partial }{\partial \hp_t}  e^{-\frac12 \enorm{p_0}^2}$ with respect to $\hp_t$, $\nabla v$ is the rest). The boundary term vanishes as $e^{-f}$ is subexponential-tailed.  

Combined with the second term, we obtain 
\begin{align*}
  &\quad \Exs \parenth{p_0 \tp \hessf_{\hq_t} p_0}^\ell \\
  &\leq  \Exs \parenth{p_0 \tp \hessf_{\hq_t} p_0}^{\ell-\frac12} \parenth{t \Lparam^{\frac12}\enorm{\gradf(q_0)} + t^2 \Lparam \enorm{\hessf_{q_0}^{\frac12} p_0}} + \Exs \parenth{p_0 \tp \hessf_{\hq_t} p_0}^{\ell-1} \parenth{\trH + 2 (\ell-1) \Lparam}.
\end{align*}
By H\"older's inequality, we have
\begin{align*}
  &\quad \Exs \parenth{p_0 \tp \hessf_{\hq_t} p_0}^\ell \\
  &\leq \parenth{t \Lparam^{\frac12}}^{2\ell} \Exs  \enorm{\gradf(q_0)}^{2\ell} +  \parenth{t^2 \Lparam}^{2\ell} \Exs  \parenth{p_0 \hessf_{q_0} p_0}^{\ell} + \parenth{\trH + 2 (\ell-1) \Lparam}^{\ell}.
\end{align*}
Together with Lemma~\ref{lem:grad_norm_bound} and Lemma~\ref{lem:pHp_bound}, we conclude that
\begin{align*}
  \brackets{\Exs \parenth{p_0 \tp \hessf_{\hq_t} p_0}^\ell}^{\frac{1}{\ell}} \leq (1+ t\Lparam^{\frac12}) \parenth{\trH + 2 (\ell-1) \Lparam} \leq 2 Cl.
\end{align*}
\end{proof}

\begin{proof}[Proof of Lemma~\ref{lem:grad_diff_norm_bound}]
  Since we have
\begin{align*}
  \enorm{\gradf(q_t) - \gradf(q_0)} &= \enorm{\int_0^t \hessf_{q_s} \parenth{p_0 - s \gradf(q_0)} ds } \\
  &\leq \int_0^t \Lparam^{\frac12} \parenth{p_0 \tp \hessf_{q_s} p_0}^{\frac12} + s \Lparam \enorm{\gradf(q_0)} ds,
\end{align*}
it follows that 
\begin{align*}
  \Exs_{(q_0, p_0) \sim \mu} \enorm{\gradf(q_t) - \gradf(q_0)}^{2\ell} &\leq \Exs t^{2\ell-1} \int_0^t \parenth{\Lparam^{\frac12} \parenth{p_0 \tp \hessf_{q_s} p_0}^{\frac12} + s \Lparam \enorm{\gradf(q_0)}}^{2\ell} ds \\
  &\leq \Exs t^{2\ell-1} 2^{\ell-1} \int_0^t \parenth{\Lparam^\ell \parenth{p_0 \tp \hessf_{q_s} p_0}^{\ell} + s^{2\ell} \Lparam^{2\ell} \enorm{\gradf(q_0)}^{2\ell}} ds \\
  &\overset{(i)}{\leq} \parenth{4 t^2 \Lparam \Cl}^{\ell}.
\end{align*}
Here (i) follows from Lemma~\ref{lem:grad_norm_bound} and~\ref{lem:pHp_at_qt_bound}.
\end{proof}


\bibliographystyle{alpha}
\bibliography{ref}

\appendix

\end{document}